\newcommand\reallywidehat[1]{%
\savestack{\tmpbox}{\stretchto{%
  \scaleto{%
    \scalerel*[\widthof{\ensuremath{#1}}]{\kern-.6pt\bigwedge\kern-.6pt}%
    {\rule[-\textheight/2]{1ex}{\textheight}}
  }{\textheight}%
}{0.5ex}}%
\stackon[1pt]{#1}{\tmpbox}%
}
\newcommand{\ceil}[1]{\left \lceil #1 \right \rceil}
\newtheorem{definition}{Definition}
\newtheorem{theorem}{Theorem}
\newtheorem{lemma}{Lemma}
\newtheorem{prop}{Proposition}
\newtheorem{rmk}{Remark}
\newenvironment{fminipage}%
  {\begin{Sbox}\begin{minipage}}%
  {\end{minipage}\end{Sbox}\fbox{\TheSbox}}
\newcommand*{\rom}[1]{\expandafter\@slowromancap\romannumeral #1@}
\newcommand{\Ind}{\mathbbm{1}}
\newcommand{\vol}{\s{Vol}}
\newcommand{\abs}[1]{\left|#1\right|}
\newcommand{\R}{\mathbb{R}} 
\newcommand{\N}{\mathbb{N}}
\newcommand{\E}{\mathbb{E}}
\def\P{{\mathbb P}}
\def\S{{\mathbb S}}
\newcommand {\pr} {\mathbb{P}}
\newcommand{\calA}{{\cal A}}
\newcommand{\calB}{{\cal B}}
\newcommand{\calH}{{\cal H}}
\newcommand{\calL}{{\cal L}}
\newcommand{\calN}{{\cal N}}
\newcommand{\calP}{{\cal P}}
\newcommand{\calQ}{{\cal Q}}
\newcommand{\be}{\begin{equation}}
\newcommand{\ee}{\end{equation}}
\newcommand{\beqna}{\begin{eqnarray}}
\newcommand{\eeqna}{\end{eqnarray}}
\DeclarePairedDelimiterX{\set}[1]{\{}{\}}{\setargs{#1}}
\DeclarePairedDelimiter\abss{\lvert}{\rvert}
\DeclarePairedDelimiter\parenv{\lparen}{\rparen}
\newcommand{\indep}{\perp \!\!\! \perp}
\DeclareMathOperator{\Id}{\s{Id}}
\newcommand{\p}[1]{\left(#1\right)}
\newcommand{\pp}[1]{\left[#1\right]}
\newcommand{\ppp}[1]{\left\{#1\right\}}
\newcommand{\norm}[1]{\left\|#1\right\|}
\newcommand{\s}[1]{\mathsf{#1}}
\begin{document}
\title{Phase Transitions in the Detection of Correlated Databases}

\author{Dor~Elimelech\thanks{D. Elimelech is with the School of Electrical and Computer Engineering at Ben-Gurion university, {B}eer {S}heva 84105, Israel (e-mail:  \texttt{doreli@post.bgu.ac.il}). The work of D.Elimelech was supported by the ISRAEL SCIENCE FOUNDATION (grant No. 1058/18).}~~~~~~~~~~~Wasim~Huleihel\thanks{W. Huleihel is with the Department of Electrical Engineering-Systems at Tel Aviv university, {T}el {A}viv 6997801, Israel (e-mail:  \texttt{wasimh@tauex.tau.ac.il}). The work of W. Huleihel was supported by the ISRAEL SCIENCE FOUNDATION (grant No. 1734/21).}}

\maketitle

\begin{abstract}

We study the problem of detecting the correlation between two Gaussian databases $\s{X}\in\mathbb{R}^{n\times d}$ and $\s{Y}^{n\times d}$, each composed of $n$ users with $d$ features. This problem is relevant in the analysis of social media, computational biology, etc. We formulate this as a hypothesis testing problem: under the null hypothesis, these two databases are statistically independent. Under the alternative, however, there exists an unknown permutation $\sigma$ over the set of $n$ users (or, row permutation), such that $\s{X}$ is $\rho$-correlated with $\s{Y}^\sigma$, a permuted version of $\s{Y}$. We determine sharp thresholds at which optimal testing exhibits a phase transition, depending on the asymptotic regime of $n$ and $d$. Specifically, we prove that if $\rho^2d\to0$, as $d\to\infty$, then weak detection (performing slightly better than random guessing) is statistically impossible, \emph{irrespectively} of the value of $n$. This compliments the performance of a simple test that thresholds the sum all entries of $\s{X}^T\s{Y}$. Furthermore, when $d$ is fixed, we prove that strong detection (vanishing error probability) is impossible for any $\rho<\rho^\star$, where $\rho^\star$ is an explicit function of $d$, while weak detection is again impossible as long as $\rho^2d\to0$. These results close significant gaps in current recent related studies.
    
\end{abstract}

\section{Introduction} \label{sec:intro}
\sloppy
Database alignment and the quantification of the relation between different databases are among the most fundamental tasks in modern applications of statistics. In many cases, the observed databases are high-dimensional, unlabeled, noisy, and scrambled, making the task of inference challenging. An example of such an inference task between, say, two databases, formulated as an hypothesis testing problem, is the following: under the null hypothesis, the databases are statistically uncorrelated, while under the alternative, there exists a permutation which scrambles one database, and for which the two databases are correlated. Then, observing the databases, under what conditions can we tell if they are correlated or not?

It turns out that the above general inference formulation is relevant in many fields, such as, computational biology \cite{pmid:18725631,kang2012fast}, social network analysis \cite{4531148,5207644}, computer vision \cite{Berg2005,10.5555/2976456.2976496}, and data anonymization/privacy based systems. A concrete classical example is: consider two data sources (e.g., Netflix and IMDb), each supplying lists of features for a set of entities, say, users. Those features might be various characteristics of those users, such as, names, user identifications, ratings. In many cases, reliable labeling for features is either not available or deleted (so as to hide sensitive unique identifying information) due to privacy concerns. In general, this precludes trivial identification of feature pairs from the two sources that correspond to the same user. Nonetheless, the hope is that if the correlation between the two databases is sufficiently large, then it is possible to identify correspondences between the two databases, and generate an alignment between the feature lists \cite{4531148,5207644}.

Quite recently, the \emph{data alignment problem}, a seemingly simple probabilistic model which captures the scenario above, was introduced and investigated in \cite{10.1109/ISIT.2018.8437908,pmlr-v89-dai19b}. Specifically, there are two databases $\s{X}\in\mathbb{R}^{n\times d}$ and $\s{Y}^{n\times d}$, each composed of $n$ users with $d$ features. There exist an unknown permutation/correspondence which match users in $\s{X}$ to users in $\s{Y}$. For a pair of matched database entries, the features are dependent according to a known distribution, and, for unmatched entries, the features are independent. The goal is to \emph{recover} the unknown permutation, and derive statistical guarantees for which recovery is possible and impossible, as a function of the correlation level, $n$, and $d$. Roughly speaking, this recovery problem is well-understood for a wide family of probability distributions. For example, in the Gaussian case, denoting the correlation coefficient by $\rho$, it was shown in \cite{pmlr-v89-dai19b} that if $\rho^2 = 1-o(n^{-4/d})$ then perfect recovery is possible, while impossible if $\rho^2 = 1-\omega(n^{-4/d})$, as $n,d\to\infty$.

The \emph{detection} counterpart of the above recovery problem was also investigated in \cite{9834731,nazer2022detecting}. Here, as mentioned above, the underlying question is, given two databases, can we determine whether they are correlated? It was shown that if $\rho^2d\to\infty$ then efficient detection is possible (with vanishing error probability), simply by thresholding the sum of entries of $\s{X}^T\s{Y}$. On the other hand, it was also shown that if $\rho^2d\sqrt{n}\to0$ and $d=\Omega(\log n)$, then detection is information-theoretically impossible (i.e., lower bound), again, as $n,d\to\infty$. Unfortunately, it is evident that there is a substantial gap between those two bounds. Most notably, the aforementioned upper bound is completely independent of $n$, implying that it does not play any significant role in the detection problem, while the lower bound depends on $n$ strongly. This sets precisely the main goal of this paper: \emph{we would like to characterize the detection boundaries tightly, as a function of $n$ and $d$.}

At first glance, one may suspect that the source for this gap is the proposed algorithm. Indeed, note that under the alternative distribution, the latent permutation represents a \emph{hidden} correspondence under which the databases are correlated. Accordingly, the ``thresholding the sum" approach ignores this hidden combinatorial structure, and therefore, seemingly suboptimal.  However, it turns out that in the regime where $d\to\infty$, \emph{independently} of $n$, this simple approach is actually surprisingly optimal: we prove that whenever $\rho^2d\to0$, then weak detection (performing slightly better than random guessing) is information-theoretically impossible, while if $\rho^2\lessapprox1/d$ (that is, $\rho^2\leq (1-\varepsilon)/d$ for some $\varepsilon>0$), then strong detection (vanishing error probability) is information-theoretically impossible. This behaviour, however, changes when $d$ is fixed. 
In this case, we prove that strong detection is impossible for any $\rho<\rho^\star$, where $\rho^\star$ is an explicit function of $d$, while possible when $\rho=1-o(n^{-2/(d-1)})$. The later is achieved by counting the number of empirical pairwise correlations that exceed as certain threshold. Finally, we prove that weak detection (performing slightly better than random guessing) is impossible when $\rho^2d\to0$, while possible for any $\rho>\rho^{\star\star}$, where  $\rho^{\star\star}$ is again an explicit function of $d$. The bounds from previous work and our new results described above are summarized in Table~\ref{tab:exact}.

\begin{table}
\begin{center}
\renewcommand{\arraystretch}{2}
\begin{tabular}{ |p{3.4cm}||p{2.3cm}|p{2.2cm}|p{2.35cm}| p{2.2cm}|}
 \hline
   & \multicolumn{2}{|c|}{\textbf{Weak Detection}} &  \multicolumn{2}{|c|}{\textbf{Strong Detection}}  \\
 \hline
  \textbf{Asymptotics} & \textbf{Possible} & \textbf{Impossible} &\textbf{Possible}& \textbf{Impossible}\\
 \hline
 $n,d\to \infty$   & $\Omega\parenv*{d^{-1}}^* $    & $o\parenv*{d^{-1}} $ &  $\omega\parenv*{d^{-1}}^*$    & $(1-\varepsilon)d^{-1} $\\
 $d\to \infty$, $n$ constant & $\Omega\parenv*{d^{-1}}^*$    & $o\parenv*{d^{-1}} $ &  $\omega\parenv*{d^{-1}}^*$    & $O\parenv*{d^{-1}} $\\
 $n\to\infty$, $d$ constant &$\rho^2\geq \frac{60\log2}{d}^*$ & $o\parenv*{1}$&  
 $1-o(n^{-\frac{2}{d-1}})$ &$\rho^\star(d)$\\
\hline
\end{tabular}
\end{center}
\caption{A summary of our bounds on $\rho^2$, for weak and strong detection, as a function of the asymptotic regime. Bounds marked with $*$ follows from the upper-bound of \cite{nazer2022detecting}.}
\label{tab:exact}
\end{table}

We now mention other related work briefly. In \cite{9174507} the problem of partial recovery of the permutation aligning the databases was analyzed. In \cite{ShiraniISIT} necessary and sufficient conditions for successful recovery matching using a typicality-based framework were established. Furthermore, \cite{Bakirtas2021DatabaseMU} and \cite{Bakirtas2022DatabaseMU} proposed and explored the problem of permutation recovery under feature deletions and repetitions, respectively. Recently, the problem of joint correlation detection and alignment of {G}aussian database was analyzed in \cite{tamir}. Finally, we note that the problem of database alignment and detection is closely related to a wide verity of planted matching problems, specifically, the graph alignment problem, with many exciting and interesting results, and useful mathematical techniques, see, e.g., \cite{GraphAl1,GraphAl2,GraphAl3,GraphAl4,wu2020testing,GraphAl5,GraphAl6,GraphAl7}, and many references therein.

\paragraph{Notation.}
For any $n\in\mathbb{N}$, the set of integers $\{1,2,\dots,n\}$ is denoted by $[n]$. Let $\mathbb{S}_n$ denotes the set of all permutations on $[n]$. For a given permutation $\sigma\in\mathbb{S}_n$, let $\sigma_i$ denote the value to which $\sigma$ maps $i\in[n]$. We use $\log$ to denote the natural logarithm function (i.e., of base $e$). Random vectors are denoted by capital letters such as $X$ with transpose $X^T$. A collection of $n$ random vectors is written as $\s{X}$ = $(X_1,\ldots,X_n)$. The notation $(X_1,\ldots,X_n)\sim P_X^{\otimes n}$ means that the random vectors $(X_1,\ldots,X_n)$ are independent and identically distributed (i.i.d.) according to $P_X$. We use $\calN(\eta,\Sigma)$ to represent the multivariate normal distribution with mean vector $\eta$ and covariance matrix $\Sigma$. Let $\s{Poisson}(\lambda)$ denote the Poisson distribution with parameter $\lambda$. The $n\times n$ identity matrix is denoted by $I_{n\times n}$, and $0_d$ denotes the all zero $d$-dimensional column vector. Let $\calL(Y)$ denote the law, that is, the probability distribution, of a random variable $Y$. For probability measures $\mathbb{P}$ and $\mathbb{Q}$, let $d_{\s{TV}}(\mathbb{P},\mathbb{Q})=\frac{1}{2}\int |\mathrm{d}\mathbb{P}-\mathrm{d}\mathbb{Q}|$ denote the total variation distance. For a probability measure $\mu$ on a space $\Omega$, we use $\mu^{\otimes d}$ for the product measure of $\mu$ ($d$ times) on the product space $\Omega^d$. For a measure $\nu\ll\mu$ (that is, a measure absolutely continuous with respect to $\mu$), we denote (by abuse of notation) the Randon-Nikodym derivative $\nu$ with respect to $\mu$ by $\frac{\nu}{\mu}$. For functions $f,g:\N \to \R$, we say that $f=O(g)$ (and  $f=\Omega(g)$) if there exists $c>0$ such that $f(n)\leq cg(n)$ (and $f(n)\geq cg(n)$) for all $n$. We say that $f=o(g)$ if $\lim_{n\to\infty}f(n)/g(n)=0$, and that $f=\omega(g)$ if $g=o(f)$. 

\section{Problem Formulation and Main Results}\label{sec:model}
\paragraph{Probabilistic Model.} Consider the following binary hypothesis testing problem. Under the null hypothesis $\calH_0$, the Gaussian databases $\s{X}$ and $\s{Y}$ are generated independently with $X_1,\ldots,X_n,Y_1,\ldots,Y_n\sim \calN(0_d,I_{d\times d})$. Let $\mathbb{P}_0$ denote the resulting distribution over $(\s{X},\s{Y})$. Under the alternate hypothesis $\calH_1$, the databases $\s{X}$ and $\s{Y}$ are correlated with permutation $\sigma$ for some unknown $\sigma\in\mathbb{S}_n$ and some known correlation coefficient $\rho\in[-1,1]\setminus\{0\}$. Namely,
\begin{equation}
\begin{aligned}\label{eqn:decprob}
    &\calH_0: (X_1,Y_1),\ldots,(X_n,Y_n)\stackrel{\mathrm{i.i.d}}{\sim} \mathcal{N}^{\otimes d}\p{\begin{bmatrix}
0 \\
0 
\end{bmatrix},\begin{bmatrix}
1 & 0\\
0 & 1
\end{bmatrix}}\\
& \calH_1: (X_1,Y_{\sigma_1}),\ldots,(X_n,Y_{\sigma_n})\stackrel{\mathrm{i.i.d}}{\sim} \mathcal{N}^{\otimes d}\p{\begin{bmatrix}
0 \\
0 
\end{bmatrix},\begin{bmatrix}
1 & \rho\\
\rho & 1
\end{bmatrix}},
\end{aligned}
\end{equation}
for some permutation $\sigma\in\mathbb{S}_n$. For a fixed $\sigma\in \S_n $, we denote the joint distribution measure of $(\s{X},\s{Y})$  under the hypothesis $
\calH_{1}$ by $\P_{\calH_1\vert\sigma}$. See Figure~\ref{fig:comp} for a visual illustration of our probabilistic model.

\begin{figure}[t]
\centering
\begin{overpic}[scale=0.4]
    {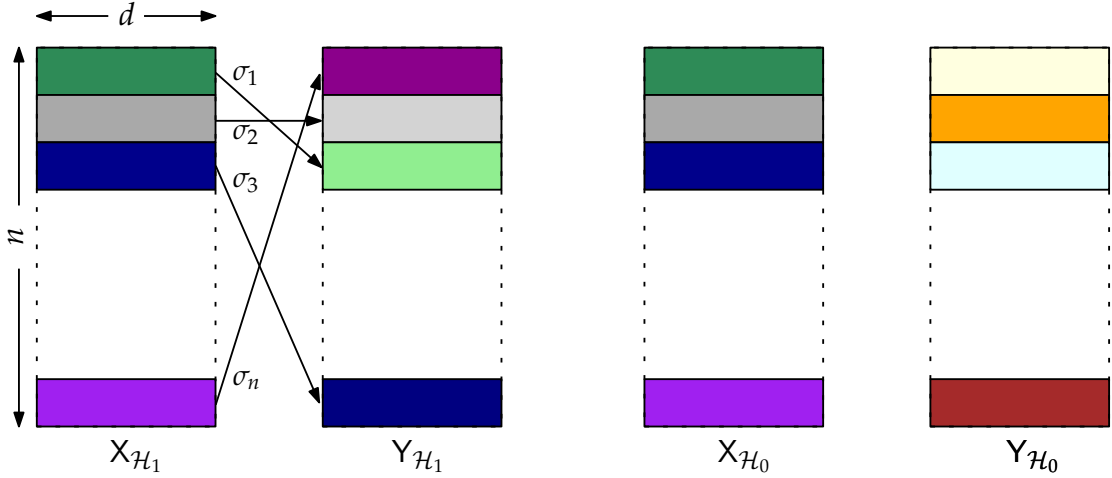}
    \put(-0.5,21.7){\begin{turn}{90}$n$\end{turn}}
    \put(9.6,41.8){$d$}
    \put(8.8,2){$\s{X}_{\calH_1}$}
    \put(34.5,2){$\s{Y}_{\calH_1}$}
    \put(64.2,2){$\s{X}_{\calH_0}$}
    \put(90.5,2){$\s{Y}_{\calH_0}$}
    \put(90.5,2){$\s{Y}_{\calH_0}$} 
    \put(20,36.8){\small{$\sigma_1$}}
    \put(20,31.4){\small{$\sigma_2$}} 
    \put(20,27){\small{$\sigma_3$}}     
    \put(20,9){\small{$\sigma_n$}}    
\end{overpic}
\caption{
An illustration of the probabilistic model. On the left are the databases $\s{X}$ and $\s{Y}$ under the hypothesis $\calH_1$, where correlated vectors are marked with a similar color. On the right are the uncorrelated databases under the null hypothesis.  
}
\label{fig:comp}
\end{figure}

\paragraph{Learning Problem.} A test function for our problem is a function $\phi:\R^{n\times d}\times\R^{n\times d}\to \{0,1\}$, designed to determine which of the hypothesis $\calH_0,\calH_1$ occurred. The risk of a test $\phi$ is defined as the sum of its Type-I and (worst-case) Type-II error probabilities, i.e.,
\begin{align}
\s{R}(\phi)\triangleq \P_{\calH_0}[\phi(\s{X},\s{Y})=1]+\max_{\sigma\in \S_n}\P_{\calH_1\vert\sigma}[\phi(\s{X},\s{Y})=0].
\end{align}
The \textit{minimax} risk for our hypothesis detection problem is 
\begin{align}
\s{R}^\star\triangleq\inf_{\phi:\R^{n\times d}\times\R^{n\times d}\to \{0,1\}}\s{R}(\phi).
\end{align}
We remark that $\s{R}$ is a function of $\rho,d$, and $n$, however, we omit them from our notation for the benefit of readability. 

We study the possibility and impossibility of our detection problem in multiple asymptotic regimes. These regimes are characterized by sequences of the parameters $(\rho,d,n)=(\rho_k,d_k,n_k)_{k\in \N}$. We consider the scenarios where $d_k$ and $n_k$ are either bounded or diverge to infinity. Accordingly, whenever we use asymptotic notations, such as, $\rho^2 = o(\cdot)$, $\rho^2=\Omega(\cdot)$, etc., it should be understood in the context of the sequences above. For example, the condition $\rho^2=o(d^{-1})$ means that the sequence $(\rho,d,n)$ satisfies $ \rho^2_k d_k\to0$, as $k\to\infty$.

\begin{definition}
A sequence $(\rho,d,n)=(\rho_k,d_k,n_k)_k$ is said to be:
\begin{enumerate}
    \item Admissible for strong detection if 
    $\lim_{k\to \infty}\s{R}^\star=0$.
    \item Admissible for weak detection if
    $\limsup_{k\to \infty} \s{R}^\star<1$.
\end{enumerate}
Clearly, admissibility of strong detection implies the admissibility of weak detection.
\end{definition}
While admissibility of strong detection clearly refers to the existence of algorithms that correctly detects with probability that tends to $1$, weak detection implies the the existence of algorithms which are asymptotically better then randomly guessing which of the hypothesis occurred. A useful way to rule out the possibility of weak/strong detection is by considering the relaxed average-case problem, where the permutation is uniformly drawn rather then been arbitrary. In that case, the risk function is characterized by the total variation distance between the null hypothesis distribution $\P_{\calH_0}$ and the distribution under the alternative hypothesis $\P_{\calH_1}$. In particular, it can be shown that,
\begin{equation}\label{eq:TV1}
    d_{\s{TV}}(\P_{\calH_0},\P_{\calH_1})=o(1)\implies \lim_{k\to\infty}\s{R}^\star=1,
\end{equation}
and
\begin{equation}\label{eq:TV2}
    d_{\s{TV}}(\P_{\calH_0},\P_{\calH_1})\leq1-\Omega(1) \implies\liminf_{k\to\infty} \s{R}^\star>0,
\end{equation}
which correspond to the impossibility of weak and strong detection, respectively. We further discuss the relations specified in \eqref{eq:TV1} and \eqref{eq:TV2} in Section~\ref{sec:lower}.


\paragraph{Main Results.}

In this section, we present our results concerning the  thresholds for admissibility and impossibility of weak and strong detection, in different asymptotic regimes: (a) both $n$ and $d$ tends to infinity, (b) $n$ is a constant and $d$ tends to infinity, and (c) $d$ is a constant and $n$ tends to infinity. We begin with our upper-bounds. As was mentioned in the Introduction, \cite{nazer2022detecting} proposed the following simple test:
\begin{align}
    \phi_{\s{sum}}(\s{X},\s{Y})\triangleq \Ind\ppp{\s{sign}(\rho)\sum_{i,j=1}^nX_i^TY_j>|\rho|\frac{dn}{2}}.
\end{align}
We have the following result. 
\begin{theorem}{\cite[Theorem 1]{nazer2022detecting}}\label{th:UBbobak}
Consider the detection problem in \eqref{eqn:decprob}. Then,
\begin{align}
    \s{R}(\phi_{\s{sum}})\leq 2\cdot\exp\p{-\frac{d\rho^2}{60}}.\label{eqn:bobakrisk}
\end{align}
\end{theorem}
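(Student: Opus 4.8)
\emph{Proof proposal.} The plan is to collapse the test statistic to a one–dimensional quantity with an explicit law under each hypothesis, and then apply Chernoff bounds to the two error terms separately.

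\textbf{Step 1 (Reduction).} Write $S\triangleq\sum_{i,j=1}^n X_i^TY_j=U^TV$, where $U\triangleq\sum_{i=1}^n X_i$ and $V\triangleq\sum_{j=1}^n Y_j$ are the column sums of $\s{X}$ and $\s{Y}$. Replacing $\s{Y}$ by $\s{sign}(\rho)\s{Y}$ leaves $\P_{\calH_0}$ unchanged and only flips the sign of $\rho$ under $\calH_1$, so we may assume $\rho>0$. Under $\calH_0$, $U$ and $V$ are independent $\calN(0_d,nI_{d\times d})$. Under $\calH_1$, since the pairs $(X_i,Y_{\sigma_i})$ are i.i.d.\ with standard marginals, $(U,V)$ is centered jointly Gaussian with $\Cov(U)=\Cov(V)=nI_{d\times d}$ and $\Cov(U,V)=n\rho I_{d\times d}$; crucially this law is independent of $\sigma$, so $\max_{\sigma}\P_{\calH_1\vert\sigma}[\phi_{\s{sum}}=0]$ equals the Type-II probability for any fixed $\sigma$. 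As $\E_{\calH_0}[S]=0$ and $\E_{\calH_1}[S]=\tr(\Cov(U,V))=\rho dn$, the threshold $\rho dn/2$ is the exact midpoint, and it suffices to bound $\P_{\calH_0}[S>\rho dn/2]$ and $\P_{\calH_1}[S\le\rho dn/2]$.

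\textbf{Step 2 ($\chi^2$ representation and Chernoff).} Using $S=\sum_{k=1}^d U_kV_k$ together with the polarization identity $U_kV_k=\tfrac14\big((U_k+V_k)^2-(U_k-V_k)^2\big)$ and the fact that $U_k+V_k$ and $U_k-V_k$ are independent centered Gaussians (their covariance is $\Var(U_k)-\Var(V_k)=0$ under both hypotheses), one obtains
\begin{align*}
\text{under }\calH_0:\quad & S\ \stackrel{d}{=}\ \tfrac n2\,(A-B),\\
\text{under }\calH_1:\quad & S\ \stackrel{d}{=}\ \tfrac n2\,\big((1+\rho)A-(1-\rho)B\big),
\end{align*}
with $A,B\sim\chi^2_d$ independent in each line. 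Hence the error events are $\{A-B>\rho d\}$ and $\{(1+\rho)A-(1-\rho)B\le\rho d\}$. Since $\E[e^{t\chi^2_d}]=(1-2t)^{-d/2}$ for $t<1/2$, the Chernoff bound gives, for any admissible $t>0$,
\begin{align*}
\P_{\calH_0}[S>\tfrac{\rho dn}{2}] &\le\ e^{-t\rho d}\,(1-4t^2)^{-d/2},\\
\P_{\calH_1}[S\le\tfrac{\rho dn}{2}] &\le\ e^{t\rho d}\,(1+2t(1+\rho))^{-d/2}\,(1-2t(1-\rho))^{-d/2}.
\end{align*}

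\textbf{Step 3 (Uniform control of the exponent).} It remains to choose $t$ so that each right-hand side is at most $\exp(-d\rho^2/60)$ \emph{for every} $\rho\in(0,1]$, not merely as $\rho\to0$. Taking $t$ a small multiple of $\rho$ — e.g.\ $t=\rho/4$ in the first bound and $t=\rho/8$ in the second, both automatically satisfying $t<1/2$ and $t(1-\rho)<1/2$ — and invoking the elementary one-sided estimates $\log(1+x)\ge x-x^2/2$ for $x\ge0$ and $-\log(1-y)\le y+y^2$ for $y\in[0,1/2]$, together with $\rho^4\le\rho^2$ and $(1+\rho)^2+2(1-\rho)^2\le4$ on $[0,1]$, each per-coordinate log-exponent is bounded by a negative absolute constant times $\rho^2$; multiplying by $d$ and summing the two terms yields $\s{R}(\phi_{\s{sum}})\le 2\exp(-d\rho^2/60)$ (in fact with a constant better than $60$, which is clearly not optimized). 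The only genuine obstacle is this final uniform bookkeeping: a naive second-order Taylor expansion of the log-MGF is valid only for $\rho,t$ near zero, so to preserve the quadratic-in-$\rho$ decay over the entire range $\rho\le1$ one must use the explicit logarithmic inequalities and the crude bounds above rather than asymptotic expansions.
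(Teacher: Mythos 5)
Your proposal is correct, but note that the paper itself contains no proof of this statement: Theorem~\ref{th:UBbobak} is imported verbatim from \cite[Theorem 1]{nazer2022detecting}, so there is no in-paper argument to compare against. What you have written is a complete, self-contained derivation. The key reductions are sound: $S=U^TV$ with $U=\sum_i X_i$, $V=\sum_j Y_j$ has a law under $\calH_1$ that does not depend on $\sigma$ (since $V=\sum_i Y_{\sigma_i}$ and the pairs $(X_i,Y_{\sigma_i})$ are i.i.d.), which disposes of the worst-case maximum over permutations; the polarization step is valid because $U_k+V_k$ and $U_k-V_k$ are uncorrelated jointly Gaussian, hence independent, giving $S\stackrel{d}{=}\tfrac n2(A-B)$ under $\calH_0$ and $S\stackrel{d}{=}\tfrac n2((1+\rho)A-(1-\rho)B)$ under $\calH_1$ with $A,B\sim\chi^2_d$ independent, and the factor $n$ cancels against the threshold so the final bound is indeed $n$-free, exactly as the statement requires. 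I checked the bookkeeping in Step 3: with $t=\rho/4$ the Type-I exponent per coordinate is at most $-\rho^2/4+\tfrac12(\rho^2/4+\rho^4/16)\le-3\rho^2/32$, and with $t=\rho/8$ the Type-II exponent is at most $-\rho^2/8+\tfrac{\rho^2}{64}\big((1+\rho)^2+2(1-\rho)^2\big)\le-\rho^2/16$ using $(1+\rho)^2+2(1-\rho)^2\le4$ on $[0,1]$, so both error terms are below $\exp(-d\rho^2/16)\le\exp(-d\rho^2/60)$ uniformly in $\rho\in(0,1]$, which gives the claimed bound (with room to spare in the constant). Your closing remark is also the right caveat: the quadratic-in-$\rho$ decay must be established with explicit logarithmic inequalities valid on the whole range, not with a local Taylor expansion of the log-MGF.
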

The implication of Theorem~\ref{th:UBbobak} is that if $\rho^2=\omega(d^{-1})$, then $\phi_{\s{sum}}$ achieves strong detection. Furthermore, if $\rho^2>\frac{60\log 2}{d}$, then $\phi_{\s{sum}}$ achieves weak detection. Note that the upper bound in \eqref{eqn:bobakrisk} is completely independent of $n$. We observe that in the case where $d$ is constant, the bound \eqref{eqn:bobakrisk} can never guarantee strong detection using $\phi_{\s{sum}}$, not even in the trivial case where $\rho^2=1$ (where detection with zero risk is possible). It should be emphasized that the above phenomenon is inherent; it can be shown that the boundary $\rho^2=\omega(d^{-1})$ associated with $\s{R}(\phi_{\s{sum}})$ cannot be improved and is not an artifact of the bounding technique used to establish \eqref{eqn:bobakrisk}. 

Aiming for strong detection in the scenario where $d$ is constant (and $\rho^2$ is smaller than unity), we propose the following alternative detection algorithm. For $i\in[n]$, let us define $\bar{X}_i \triangleq\frac{X_i}{\norm{X_i}_2}$ and $\bar{Y}_i \triangleq\frac{Y_i}{\norm{Y_i}_2}$. Then, define the test
\begin{align}
    \phi_{\s{count}}(\s{X},\s{Y})\triangleq\Ind\ppp{\sum_{i,j=1}^n\Ind\ppp{\s{sign}(\rho)\bar{X}_i^T\bar{Y}_j\geq|\rho|}\geq \frac{1}{2} n\calP_{d,\rho}},\label{eqn:testcount}
\end{align}
where $\calP_{d,\rho}\triangleq\pr_{\rho}\p{\bar{X}_1^T\bar{Y}_1\geq\rho}$, and
\begin{align}\label{eqn:p_rho}
    \pr_\rho \triangleq \mathcal{N}^{\otimes d}\p{\begin{bmatrix}
0 \\
0 
\end{bmatrix},\begin{bmatrix}
1 & \rho\\
\rho & 1
\end{bmatrix}}.
\end{align}
The following theorem, which is a result shows that as long as $\rho$ tends to one sufficiently fast, strong detection is possible.
\begin{theorem}\label{th:UBreconstruction}
Consider the detection problem in \eqref{eqn:decprob} and fix $2\leq d\in\mathbb{N}$. Then, $\s{R}(\phi_{\s{count}})\to 0$, as $n\to\infty$, if $\rho^2 = 1-o(n^{-\frac{2}{d-1}})$.
\end{theorem}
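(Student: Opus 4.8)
The plan is to analyze the test $\phi_{\s{count}}$ directly via a first-moment/second-moment argument on the count statistic $N \triangleq \sum_{i,j=1}^n \Ind\{\s{sign}(\rho)\bar X_i^T\bar Y_j \geq |\rho|\}$, separating the contribution of the $n$ ``planted'' pairs $(i,\sigma_i)$ under $\calH_1$ from the $n^2 - n$ ``off-diagonal'' pairs, which behave identically under both hypotheses. Write $q_n \triangleq \calP_{d,\rho} = \pr_\rho(\bar X_1^T\bar Y_1 \geq \rho)$ for the correlated-pair success probability and $p_n \triangleq \pr_0(\bar X_1^T \bar Y_1 \geq \rho)$ for the independent-pair probability. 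First I would record the key distributional fact: for independent $X, Y \sim \calN(0_d, I_{d\times d})$, the normalized inner product $\bar X^T \bar Y$ is distributed as a coordinate of a uniform point on $\mathbb S^{d-1}$, so $p_n = \pr_0(\bar X_1^T\bar Y_1 \geq \rho) = c_d \int_\rho^1 (1-t^2)^{(d-3)/2}\,dt$ for an explicit constant $c_d$; as $\rho \to 1$ this behaves like $p_n \asymp_d (1-\rho)^{(d-1)/2}$. Thus the expected off-diagonal count is $(n^2-n)p_n \asymp_d n^2 (1-\rho)^{(d-1)/2}$, and the condition $1-\rho^2 = o(n^{-2/(d-1)})$ — equivalently $1-\rho = o(n^{-2/(d-1)})$ since $\rho\to 1$ — is exactly what forces $n^2 p_n \to 0$.

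Next I would handle the two hypotheses. Under $\calH_0$: $\E_{\calH_0}[N] = (n^2-n)p_n \to 0$, so by Markov's inequality $\pr_{\calH_0}(N \geq 1) \to 0$; since the threshold $\tfrac12 n q_n$ in \eqref{eqn:testcount} is at least (eventually) $\tfrac12$ — here I need $q_n$ bounded below, or at worst $nq_n \to \infty$, which I establish below — we get $\pr_{\calH_0}(\phi_{\s{count}} = 1) \to 0$, i.e. the Type-I error vanishes. Under $\calH_1\mid\sigma$: the $n$ planted pairs each independently contribute an indicator with success probability $q_n$, so the planted count $N_{\mathrm{pl}} \triangleq \sum_{i=1}^n \Ind\{\s{sign}(\rho)\bar X_i^T \bar Y_{\sigma_i} \geq |\rho|\} \sim \mathrm{Binomial}(n, q_n)$, with mean $n q_n$. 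I would show $q_n$ is bounded away from zero (indeed $q_n \to 1$ as $\rho \to 1$, since under correlation $\rho$ close to $1$ the pair is nearly aligned, so $\bar X_1^T \bar Y_1 \geq \rho$ with probability tending to one — this needs a short computation but is morally clear), hence $n q_n \to \infty$, and a Chernoff bound on the binomial gives $\pr_{\calH_1\mid\sigma}(N_{\mathrm{pl}} < \tfrac12 n q_n) \leq e^{-c n q_n} \to 0$. Since $N \geq N_{\mathrm{pl}}$, the test outputs $1$ with probability $\to 1$, uniformly in $\sigma$, so the worst-case Type-II error vanishes. Combining, $\s{R}(\phi_{\s{count}}) \to 0$.

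The main technical obstacle is pinning down the asymptotics of $q_n = \pr_\rho(\bar X_1^T\bar Y_1 \geq \rho)$ precisely enough — in particular establishing that $q_n$ stays bounded below (so that the threshold $\tfrac12 n q_n$ is genuinely of order $n$, making the Markov step under $\calH_0$ work, and so that the Chernoff concentration under $\calH_1$ has a diverging exponent). The clean way is to change variables: conditionally on the pair, $\bar X_1^T \bar Y_1$ depends only on the angle between $X_1$ and $Y_1$, and under $\pr_\rho$ one can write $Y_1 = \rho X_1 + \sqrt{1-\rho^2}\, Z$ with $Z \perp X_1$, $Z\sim\calN(0_d,I)$, so $\bar X_1^T \bar Y_1 = (\rho\|X_1\| + \sqrt{1-\rho^2}\, X_1^T\bar Z/\|X_1\|\cdot(\ldots))/\|Y_1\|$ — more cleanly, $\bar X_1^T\bar Y_1 = (\rho + \sqrt{1-\rho^2}\,W)/\sqrt{\rho^2 + 2\rho\sqrt{1-\rho^2}\,W + (1-\rho^2)\|G\|^2/\ldots}$ where $W = \bar X_1^T Z$ and the relevant randomness is $O_P(1)$ while $\rho\to 1$ and $\sqrt{1-\rho^2}\to 0$; hence $\bar X_1^T\bar Y_1 \to 1$ in probability and in particular exceeds $\rho$ with probability $\to 1$, giving $q_n \to 1$. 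I would also double-check the threshold comparison $1-\rho^2 = o(n^{-2/(d-1)}) \Rightarrow 1-\rho = o(n^{-2/(d-1)})$ (immediate from $1-\rho \leq 1-\rho^2$) and that $d\geq 2$ is used precisely to make the exponent $(d-1)/2 \geq 1/2$ positive so that $n^2 p_n \to 0$ is attainable. Everything else is a routine assembly of Markov and Chernoff bounds.
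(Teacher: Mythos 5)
Your overall strategy is the same as the paper's (first/second-moment analysis of the count statistic, splitting the $n$ planted pairs from the rest, a lower bound on $\calP_{d,\rho}$ and an upper bound on $\calQ_{d,\rho}$ via spherical-cap asymptotics), but the Type-I step as written fails under the stated hypothesis. You bound $\pr_{\calH_0}\p{\phi_{\s{count}}=1}$ by $\pr_{\calH_0}(N\geq 1)\leq \E_{\calH_0}[N]\approx n^2p_n$ (with $p_n=\calQ_{d,\rho}$) and assert that $1-\rho^2=o(n^{-2/(d-1)})$ ``is exactly what forces $n^2p_n\to0$.'' It is not: since $p_n\asymp_d (1-\rho)^{(d-1)/2}$, the hypothesis only gives $p_n=o(n^{-1})$, hence $np_n\to0$ while $n^2p_n=o(n)$, which may diverge — e.g.\ for $1-\rho^2=n^{-2/(d-1)}/\log n$ one gets $n^2p_n\asymp_d n(\log n)^{-(d-1)/2}\to\infty$. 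As written, your argument only establishes the theorem under the stronger condition $\rho^2=1-o(n^{-4/(d-1)})$: by comparing $N$ to the crude threshold $1$ instead of to the test's actual threshold $\tfrac12 n\calP_{d,\rho}$, you throw away a factor of order $n$ and lose a factor of $2$ in the exponent.

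The fix is immediate and is exactly what the paper does: apply Markov against the true threshold, $\pr_{\calH_0}\p{N\geq\tfrac12 n\calP_{d,\rho}}\leq \frac{n^2\calQ_{d,\rho}}{\frac{1}{2}n\calP_{d,\rho}}=\frac{2n\calQ_{d,\rho}}{\calP_{d,\rho}}$, and combine this with a lower bound on $\calP_{d,\rho}$ (the paper proves the uniform bound $\calP_{d,\rho}\geq 1/4$; your claim $q_n\to1$ as $\rho\to1$ would also do), so that only $n\calQ_{d,\rho}\to0$ is required — which is precisely $1-\rho^2=o(n^{-2/(d-1)})$. With that repair the rest of your proposal is sound and essentially matches the paper: the paper also bounds the Type-II error using only the $n$ planted indicators (via Chebyshev rather than Chernoff, either suffices), and it upper-bounds $\calQ_{d,\rho}$ by a spherical-cap volume estimate obtained through a Gaussian-measure trick, giving the same $(1-\rho^2)^{(d-1)/2}$ scaling that you obtain more directly from the exact beta-type density of $\bar{X}_1^T\bar{Y}_1$ under $\pr_0$; your route to that estimate is a clean and somewhat more elementary alternative. (Also, a trivial slip: under $\calH_0$ all $n^2$ pairs contribute, so $\E_{\calH_0}[N]=n^2p_n$, not $(n^2-n)p_n$.)
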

Prior to our work, for the case where $d$ is fixed and independent of $n$, it was not clear if strong detection (vanishing error probabilities) can be even achieved. In Theorem, it is shown for the first time that strong detection is possible under non-trivial conditions (i.e, $\rho^2<1$). We also point out the fact that our analysis hols only whenever $d\geq 2$. The case where $d=1$, on the other hand, remains a mystery. It turns out that many other alternative tests fail when $d=1$ as well. We suspect that this phenomenon might be inherent to the probabilistic structure whenever $d=1$, rather then an artifact of our algorithms. We leave this intriguing question open for future work.

\begin{rmk}[Recovery vs. detection] As mentioned in the introduction, the recovery problem of the permutation $\sigma$ was considered in \cite{pmlr-v89-dai19b}. It was shown that in the case where $d$ is constant, recovery is possible via the maximum-likelihood estimator if $\rho^2=1-o(n^{-\frac{4}{d}})$, while recovery is impossible if $\rho^2=1-\omega(n^{-\frac{4}{d}})$. Thus, Theorem~\ref{th:UBreconstruction} above, shows that the detection problem is statistically easier than recovery even when $d$ is fixed. In fact, denoting the maximum-likelihood estimator by $\hat{\sigma}^{\s{ML}}$, consider the following test
\begin{align}
    \phi_{\s{max}}(\s{X},\s{Y})\triangleq \Ind\ppp{\s{sign}(\rho)\sum_{i=1}^nX_i^TY_{\hat{\sigma}^{\s{ML}}_i}>|\rho|\frac{dn}{2}}.
\end{align}
It was claimed in \cite{nazer2022detecting} that as a corollary of \cite{pmlr-v89-dai19b}, this test achieves strong detection under the same recovery guarantee, namely, if $\rho^2=1-o(n^{-\frac{4}{d}})$. However, we suspect that this claim is in fact imprecise, as the authors overlooked the analysis of the Type-I error probability. Furthermore, it should be mentioned that \cite{tamir} proposed and analyzed a similar test as in \eqref{eqn:testcount}. It was shown that detection is possible if $\frac{\rho^2}{4} = 1-o(n^{-\frac{4}{d}})$. However, this condition is clearly meaningless since $|\rho|\leq 1$. Finally, we also mention that in other regimes, the threshold for the recovery problem has a significantly different behaviour compared to the detection problem; for example, if $d=\omega(\log n)$, then the recovery barrier is $\frac{\log n}{d}$ , i.e., recovery is possible (impossible) if $\rho^2\gg \frac{\log n}{d}$ ($\rho^2\ll \frac{\log n}{d}$). For detection, on the other hand, the barrier is $\frac{1}{d}$, independently of $n$.
\end{rmk}

In Theorem~\ref{th:LB}, we provide lower bounds, establishing thresholds for which weak detection is impossible.

\begin{theorem}[Impossibility of weak detection]\label{th:LB}
 Weak detection is impossible as long as $\rho^2=o(d^{-1})$. That is, for a sequence $(\rho,d,n)=(\rho_k,d_k,n_k)_k$ such that $\rho^2=o(d^{-1})$:
\begin{itemize}
     \item If $d$ is any function of $k$, and $n\to \infty$ then $\lim_{k\to \infty}\s{R}^\star =1$.
    \item If $n$ is constant and $d\to \infty$ then $\lim_{k\to \infty}\s{R}^\star =1$. 
\end{itemize}
Namely, $(\rho,d,n)$ is not admissible for  weak detection.
\end{theorem}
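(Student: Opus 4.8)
The strategy is to pass to the average-case problem and bound the total variation distance $d_{\s{TV}}(\P_{\calH_0}, \P_{\calH_1})$, where under $\P_{\calH_1}$ the permutation $\sigma$ is drawn uniformly from $\S_n$; by the implication~\eqref{eq:TV1} it suffices to show this distance is $o(1)$ whenever $\rho^2 d \to 0$. The standard route is the second-moment method: compute the $\chi^2$-divergence (equivalently, $\E_{\P_0}[(\mathrm{d}\P_{\calH_1}/\mathrm{d}\P_0)^2]$) and show it tends to $1$, since $d_{\s{TV}}^2 \le \tfrac14(\chi^2)$. Writing the likelihood ratio as an average over $\sigma$ of the per-permutation likelihood ratios $L_\sigma = \mathrm{d}\P_{\calH_1|\sigma}/\mathrm{d}\P_0$, the second moment becomes $\E_{\sigma,\sigma'}\,\E_{\P_0}[L_\sigma L_{\sigma'}]$, where $\sigma,\sigma'$ are independent uniform permutations.

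The key computation is the cross term $\E_{\P_0}[L_\sigma L_{\sigma'}]$. Because the model is a product over the $d$ coordinates and over the $n$ user-pairs, this factorizes: only the cycle structure of the permutation $\pi = \sigma^{-1}\sigma'$ matters, and one gets $\E_{\P_0}[L_\sigma L_{\sigma'}] = \prod_{\text{cycles } C \text{ of } \pi} g(\rho, d, |C|)$ for an explicit Gaussian integral $g$. Concretely, a cycle of length $\ell$ contributes a factor of the form $\det(I - \rho^2 M_\ell)^{-d/2}$ where $M_\ell$ is the $\ell \times \ell$ cyclic permutation-type matrix, which evaluates to $\prod_{j}(1 - \rho^2 \omega^j)^{-d/2}$ over $\ell$-th roots of unity $\omega^j$; a fixed point ($\ell=1$) contributes $(1-\rho^2)^{-d/2}$. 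Expanding $\log$ of each factor to leading order gives a contribution $\exp(O(d\rho^{2\ell}))$ per $\ell$-cycle, so the dominant contributions come from short cycles, and the whole product is governed by $\exp(c\, d \rho^2 \cdot (\text{number of fixed points of } \pi) + \text{lower-order})$. One then takes the expectation over $\pi$ uniform on $\S_n$, using that the number of fixed points is asymptotically $\s{Poisson}(1)$ and, more generally, that short cycle counts are asymptotically independent Poissons. This reduces the second moment to something like $\E[\,(1-\rho^2)^{-dZ_1/2}\cdot \prod_{\ell\ge 2}(\cdots)^{Z_\ell}\,]$ with $Z_\ell$ independent Poisson$(1/\ell)$, and under $\rho^2 d \to 0$ each factor is $1 + o(1)$ with the product still $1+o(1)$, giving $\chi^2 \to 1$.

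The main obstacle is making the tail control uniform in $n$ and $d$: one must show the long-cycle contributions and the large-deviation tails of the cycle-count vector do not blow up, even when $n\to\infty$ arbitrarily fast relative to $d$ (and symmetrically when $d \to \infty$ with $n$ fixed, which is the easy direction since then $\S_n$ is a fixed finite set and one just needs each $g(\rho,d,\ell)\to 1$). For $n\to\infty$ this requires a careful bound on $\det(I-\rho^2 M_\ell)^{-d/2}$ that is valid for all $\ell$ simultaneously — e.g. showing it is at most $\exp(Cd\rho^2/\ell^{?})$ or controlling $\sum_\ell (\text{contribution}) \cdot (\text{expected number of }\ell\text{-cycles})$ by a convergent series — together with a truncation argument separating $\ell$ small (Poisson regime) from $\ell$ large (where the expected count $1/\ell$ and the crude bound on $g$ combine to something summable). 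A secondary subtlety is handling fixed points: the factor $(1-\rho^2)^{-d/2}$ can be large if $\rho$ is close to $1$, but then $\rho^2 d \to 0$ forces $d$ bounded and $\rho \to 0$, so this does not actually occur; still, the write-up must address the regime boundary cleanly. Once the series bound is in place, the rest is the routine Poisson-moment computation together with dominated convergence.
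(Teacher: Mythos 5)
Your plan is essentially the paper's own route: reduce to the uniform-prior (Bayesian) risk, bound $d_{\s{TV}}(\P_{\calH_0},\P_{\calH_1})$ by the second moment $\E_0[L^2]$ via Ingster--Suslina, factor the cross term $\E_0[L_\sigma L_{\sigma'}]$ over the cycles of $\sigma^{-1}\sigma'$, and then combine a Poisson approximation for short-cycle counts with a truncation of long cycles. Two remarks. First, a small quantitative slip: the per-cycle factor is $(1-\rho^{2|C|})^{-d}$ (Lemma~\ref{lem:CycleComp}; already a fixed point contributes $(1-\rho^{2})^{-d}$, as a direct Gaussian computation shows), not $\det(I-\rho^2 M_\ell)^{-d/2}=(1-\rho^{2\ell})^{-d/2}$; the missing factor of two in the exponent is immaterial to the $\rho^2=o(d^{-1})$ threshold, but the determinant heuristic should be corrected. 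Second, the obstacle you explicitly leave open---uniform control of long cycles and of the error in replacing $(N_1,\dots,N_m)$ by independent Poissons---is exactly where the paper's work lies, and your fallback of ``controlling $\sum_\ell(\text{contribution})\cdot(\text{expected number of }\ell\text{-cycles})$ by a convergent series'' does not directly apply, since the quantity is an expectation of a \emph{product} over cycles rather than a sum. The paper closes this as follows: (i) the tail $\prod_{k\ge m}(1-\rho^{2k})^{-dN_k}$ is bounded \emph{pointwise} (not in expectation) by $\exp\p{dn\rho^{2m}/(1-\rho^{2m})}$ using the deterministic identity $\sum_k kN_k=n$, and with the cutoff $m=\ceil{\log n}$ this is $1+o(1)$ because $\log\rho^2\to-\infty$ under $\rho^2=o(d^{-1})$ when $n\to\infty$; (ii) for $k\le m$ the truncated functional is bounded by $(1-\rho^2)^{-dn}$ on the constrained set $\sum_k N_k\le n$, so the Arratia--Tavar\'e total-variation bound $F(n/m)$, which decays superexponentially in $n/\log n$, absorbs this sup-norm precisely because $d\rho^2$ stays below $1$; and (iii) the resulting Poisson computation gives $\E\pp{\prod_k(1-\rho^{2k})^{-dZ_k}}\le\exp\p{\tfrac{d\rho^2}{1-\rho^2}+\tfrac{c(d,\rho^2)\rho^4}{1-\rho^4}}=1+o(1)$. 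The case of fixed $n$ and $d\to\infty$ is indeed the easy one and is handled by the crude bound $\E_0[L^2]\le\exp\p{nd\rho^2/(1-\rho^2)}=1+o(1)$, as you indicate. So your outline is sound and matches the paper, but the step you flag as the ``main obstacle'' needs the pointwise tail bound plus the TV-times-sup-norm accounting above to become a proof.
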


Our second lower bound concerns with impossibility of strong detection. In the case where $d$ is constant. In Theorem~\ref{th:LB2} we prove that for any $d$ there exists $\rho^\star=\rho^\star(d) $ such that for $\rho^2<\rho^\star$, strong detection is impossible. In the remaining cases, we show that the function $d^{-1}$ is a threshold for strong detection.

\begin{theorem}[Impossibility of strong detection]\label{th:LB2}
A sequence $(\rho,d,n)$ is not admissible for strong detection at either of the following scenarios: 
\begin{enumerate}
    \item $d\in \N$ and $\rho\in(-1,1)$ are constants such that $d<\frac{\log(\rho^2)}{\log(1-\rho^2)}$,  and $n\to\infty$.
    \item $n,d\to \infty$,  and $\rho^2<(1-\varepsilon)d^{-1}$ for some fixed  $\varepsilon>0$ which does not depend on $n$ and $d$.
    \item $d\to \infty$, $n$ is constant, and $\rho^2=O(d^{-1})$.
\end{enumerate} 
\end{theorem}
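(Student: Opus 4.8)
\textbf{Proof plan for Theorem~\ref{th:LB2}.}

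The unifying strategy is to reduce, via \eqref{eq:TV1}--\eqref{eq:TV2}, the impossibility statements to upper bounds on $d_{\s{TV}}(\P_{\calH_0},\P_{\calH_1})$, where $\P_{\calH_1}$ is the average over a uniformly random $\sigma\in\S_n$. For the $\rho$ bounded away from $1$ cases, the natural device is the \emph{second moment method}: one computes the $\chi^2$-divergence $\chi^2(\P_{\calH_1}\|\P_{\calH_0})=\E_{\P_{\calH_0}}\big[(\mathrm{d}\P_{\calH_1}/\mathrm{d}\P_{\calH_0})^2\big]-1$, using that $d_{\s{TV}}^2\le \tfrac14\chi^2$, and shows this quantity stays bounded (for item~2, bounded strictly below some explicit constant; for items~1 and~3, merely bounded, which already rules out strong detection). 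By the standard Gaussian computation, the likelihood ratio $L_\sigma=\mathrm{d}\P_{\calH_1|\sigma}/\mathrm{d}\P_{\calH_0}$ satisfies $\E_{\P_{\calH_0}}[L_\sigma L_\tau]=\prod$ (over cycles of $\sigma^{-1}\tau$) of a factor depending on the cycle length and $\rho$; concretely, for a permutation $\pi=\sigma^{-1}\tau$ with cycle-type having $c_\ell$ cycles of length $\ell$, one gets $\E[L_\sigma L_\tau]=\prod_{\ell\ge1}\big(g_\ell(\rho)\big)^{d\,c_\ell}$ for an explicit $g_\ell(\rho)=1/\det(I-\rho^2 C_\ell)$ or similar, where $C_\ell$ is the $\ell$-cycle permutation matrix. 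Hence
\[
\chi^2(\P_{\calH_1}\|\P_{\calH_0})+1=\E_{\pi\sim\mathrm{Unif}(\S_n)}\Big[\prod_{\ell\ge1}g_\ell(\rho)^{d\,c_\ell(\pi)}\Big],
\]
and the task becomes estimating this expectation over the random permutation $\pi$. Using the classical fact that, for $\pi$ uniform, the cycle counts $(c_\ell)$ are asymptotically independent Poissons with mean $1/\ell$, one is led to a series of the form $\exp\big(\sum_{\ell\ge1}\tfrac1\ell(g_\ell(\rho)^d-1)\big)$; this converges iff $g_\ell(\rho)^d-1$ decays fast enough in $\ell$, which is exactly where the threshold on $\rho^2 d$ (item~3) or the explicit $\rho^\star(d)$ (item~2, via the $\ell=1$ term $g_1(\rho)^d=(1-\rho^2)^{-d}$ competing against $\rho^2$ in the condition $d<\log(\rho^2)/\log(1-\rho^2)$) enters.

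For item~1 ($d$ constant, $n\to\infty$, $\rho^2<\rho^\star(d)$), the key obstacle is that the $\ell=1$ contribution $g_1(\rho)^d$ can exceed $1$ substantially, so the naive Poisson-heuristic series $\sum_\ell \tfrac1\ell(g_\ell(\rho)^d-1)$ may diverge even though $d_{\s{TV}}$ stays below $1$; to handle this one must be more careful and truncate/condition on the number of fixed points, or work directly with the exact expectation $\E_\pi[\prod_\ell g_\ell(\rho)^{d c_\ell}]$ using the permanent-type formula $\sum_{\pi}\prod_\ell x_\ell^{c_\ell(\pi)}/n!$ and extract the precise growth. The condition $d<\log(\rho^2)/\log(1-\rho^2)$, i.e.\ $(1-\rho^2)^d>\rho^2$, should be exactly what makes $g_1(\rho)^d=(1-\rho^2)^{-d}<\rho^{-2}$, keeping the dominant fixed-point term controlled; this is the delicate inequality bookkeeping that constitutes the heart of item~1 and defines $\rho^\star(d)$ implicitly as the solution of $d=\log(\rho^2)/\log(1-\rho^2)$.

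For items~2 and~3 ($d\to\infty$), the computation simplifies because each $g_\ell(\rho)^d=\big(1+O(\rho^2\ell)\big)^d$ and with $\rho^2 d=O(1)$ every factor is $1+O(1/?)$-controlled; here I would bound $g_\ell(\rho)\le (1-\rho^2)^{-?}$ crudely, plug into $\exp\big(\sum_\ell\tfrac1\ell(g_\ell(\rho)^d-1)\big)$, and check the series is summable with the right constant: for item~3 any $O(d^{-1})$ suffices to get a finite bound (ruling out strong detection), while for item~2 one needs the sharper statement that as $\rho^2 d\to c<1$ the $\chi^2$ stays below the threshold making $d_{\s{TV}}\le1-\Omega(1)$, which requires tracking the $\ell=1,2$ terms precisely (the $\ell=1$ term is $\sim(1-\rho^2)^{-d}\approx e^{\rho^2 d}$, and one needs $e^{\rho^2 d}$ and the next corrections to combine to something strictly less than the constant that forces $d_{\s{TV}}\to1$). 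I expect the main obstacle throughout to be the non-uniform control of the $\ell=1$ (fixed-point) term, since it is the only cycle length whose per-copy factor does not decay, and separating its contribution—either by direct combinatorial bookkeeping (item~1) or by a sharp exponential estimate (item~2)—is what distinguishes these bounds from a one-line second-moment argument.
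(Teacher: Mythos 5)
Your overall strategy is the same as the paper's: reduce to the averaged (Bayesian) problem, bound the second moment $\E_0[L^2]$ via the cycle decomposition $\E_0[L^2]=\E_{\pi}\big[\prod_{k}(1-\rho^{2k})^{-dN_k}\big]$ (your $g_\ell(\rho)=1/\det(I-\rho^2C_\ell)=1/(1-\rho^{2\ell})$ is exactly the paper's per-cycle factor), use the Poisson approximation of cycle counts, and conclude via the standard fact that $\E_0[L^2]=O(1)$ forces $d_{\s{TV}}\le 1-\Omega(1)$. Two remarks on the easy parts: for item~3, $n$ is \emph{constant}, so no Poisson machinery is relevant or valid there; the paper simply uses $\sum_k kN_k=n$ to get $\prod_k(1-\rho^{2k})^{-dN_k}\le(1-\rho^2)^{-dn}\le\exp\big(nd\rho^2/(1-\rho^2)\big)=O(1)$. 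And for item~2 you do not need $\chi^2$ below any explicit threshold: mere boundedness already gives $d_{\s{TV}}\le 1-\Omega(1)$ (this is the Tsybakov-type fact the paper invokes), so your worry about tracking the $\ell=1,2$ terms ``below the constant that forces $d_{\s{TV}}\to1$'' is unnecessary.

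The genuine gap is in item~1, and it stems from a misdiagnosis. The series $\sum_{\ell\ge1}\tfrac1\ell\big((1-\rho^{2\ell})^{-d}-1\big)$ \emph{always} converges for fixed $d$ and $\rho^2\in(0,1)$, since $(1-\rho^{2\ell})^{-d}-1\asymp d\rho^{2\ell}$ decays geometrically in $\ell$; there is no divergence caused by the fixed-point term, and the condition $d<\log(\rho^2)/\log(1-\rho^2)$ (equivalently $(1-\rho^2)^{-d}<\rho^{-2}$) does not enter as a bound on $g_1(\rho)^d$. The real difficulty is justifying the Poisson heuristic rigorously: the statistic $\prod_k(1-\rho^{2k})^{-dN_k}$ is unbounded as $n\to\infty$ (its sup over the constraint $\sum_k kN_k\le n$ is $(1-\rho^2)^{-dn}$, exponential in $n$), so total-variation closeness of the cycle counts to independent Poissons does not by itself transfer expectations. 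The paper handles this by truncating at cycle length $m'=\lceil\alpha\log n\rceil$ with $\alpha=-1/\log(\rho^2)+\varepsilon$: the tail $k\ge m'$ is bounded deterministically by $\exp\big(dn\rho^{2m'}/(1-\rho^{2m'})\big)=1+o(1)$ using $\sum_k kN_k= n$ (this forces $\alpha>-1/\log(\rho^2)$), while for the head the Arratia--Tavar\'e error $F(n/m')\approx\exp\big(-(n/\alpha\log n)\log(n/\alpha\log n)\big)$ must beat the sup-norm $(1-\rho^2)^{-dn}$, which requires $1/\alpha+d\log(1-\rho^2)>0$; these two requirements are compatible precisely when $d<\log(\rho^2)/\log(1-\rho^2)$, which is where $\rho^\star(d)$ actually comes from. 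Your proposal gestures at ``truncate/condition on the number of fixed points, or work directly with the exact expectation,'' but does not supply this balancing argument (or a substitute such as exact generating-function asymptotics), and the mechanism you do articulate — controlling $g_1(\rho)^d$ against $\rho^{-2}$ in an allegedly divergent series — would not produce a proof.
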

We remark that the function $d^
\star(\rho^2)=\frac{\log(\rho^2)}{\log(1-\rho^2)}$ is invertible as a function $(0,1)\to (0,\infty)$. Denoting the inverse function by $\rho^\star$, the condition established in Theorem~\ref{th:LB2} is equivalent to $\rho^2< \rho^\star(d)$.


\section{Upper Bounds}\label{sec:upper0}

Without loss of generality we assume below that $\rho>0$. Recall that for $i\in[n]$, we define $\bar{X}_i \triangleq\frac{X_i}{\norm{X_i}_2}$ and $\bar{Y}_i \triangleq\frac{Y_i}{\norm{Y_i}_2}$. Furthermore, let $\calQ_{d,\rho}\triangleq\pr_0\p{\bar{X}_1^T\bar{Y}_1\geq\rho}$, and $\calP_{d,\rho}\triangleq\pr_{\rho}\p{\bar{X}_1^T\bar{Y}_1\geq\rho}$, where $\pr_\rho$ is defined in \eqref{eqn:p_rho}, and
$\pr_0\equiv\pr_{\rho=0}$. Consider the test in \eqref{eqn:testcount}.
We start by bounding the Type-I error probability. Markov's inequality implies that
\begin{align}
    \pr_{\calH_0}\p{\phi_{\s{count}}=1} &= \pr_{\calH_0}\p{\sum_{i,j=1}^n\Ind\ppp{\bar{X}_i^T\bar{Y}_j\geq\rho}\geq \frac{1}{2} n\calP_{d,\rho}}\\
    &\leq \frac{2n^2\calQ_{d,\rho}}{n\calP_{d,\rho}}.\label{eqn:type1count}
\end{align}
On the other hand, we bound the Type-II error probability as follows. Under $\calH_1$, since our proposed test is invariant to reordering of $\s{X}$ and $\s{Y}$, we may assume without loss of generality that the latent permutation is the identity one, i.e., $\sigma=\s{Id}$. Then, Chebyshev's inequality implies that
\begin{align}
     \pr_{\calH_1}\p{\phi_{\s{count}}=0} &= \pr_{\calH_1}\p{\sum_{i,j=1}^n\Ind\ppp{\bar{X}_i^T\bar{Y}_j\geq\rho}< \frac{1}{2} n\calP_{d,\rho}}\\
     &\leq \pr_{\calH_1}\p{\sum_{i=1}^n\Ind\ppp{\bar{X}_i^T\bar{Y}_i\geq\rho}< \frac{1}{2} n\calP_{d,\rho}}\\
&\leq\frac{4\cdot\s{Var}_{\rho}\p{\sum_{i=1}^n\Ind\ppp{\bar{X}_i^T\bar{Y}_i\geq\rho}}}{n^2\calP^2_{d,\rho}},
\end{align}
where $\s{Var}_{\rho}$ denotes the variance with respect to $\P_\rho$, which  for the random variable $\sum_{i=1}^n\Ind\ppp{\bar{X}_i^T\bar{Y}_i}$ equals to the variance under the hypothesis $\calH_1$. Noticing that 
\begin{align}
    \s{Var}_\rho\p{\sum_{i=1}^n\Ind\ppp{\bar{X}_i^T\bar{Y}_i\geq\rho}} &= \sum_{i=1}^n\s{Var}_\rho\p{\Ind\ppp{\bar{X}_i^T\bar{Y}_i\geq\rho}}\\
    & = n\calP_{d,\rho}(1-\calP_{d,\rho}),
\end{align}
we finally obtain,
\begin{align}
     \pr_{\calH_1}\p{\phi_{\s{count}}=0} &\leq\frac{4(1-\calP_{d,\rho})}{n\calP_{d,\rho}}\leq\frac{4}{n\calP_{d,\rho}}.\label{eqn:type2count}
\end{align}
Next, we derive bounds on $\calQ_{d,\rho}$ and $\calP_{d,\rho}$. We start with a lower bound on $\calP_{d,\rho}$. Let $Z_1,\dots ,Z_d$ be i.i.d $\calN(0,I_{d\times d})$ random vectors, independent with $X_1,\dots,X_n$. Let $Y_i'\triangleq \rho X_i+\sqrt{1-\rho^2}Z_i$. We note that under $\P_\rho$, $(\s{X},\s{Y})$ is  equally distributed as $(\s{X},\s{Y}')$. Thus, for our analysis, we shall assume without loss of generality that $\s{Y}=\s{Y'}$. Under this assumption, we have
\begin{align}
    |\bar{X}_1^T\bar{Y}_1|^2 &= \frac{\rho^2\norm{X_1}_2^4+2\rho\sqrt{1-\rho^2}\norm{X_1}_2^2X_1^TZ_1+(1-\rho^2)|X_1^TZ_1|^2}{\rho^2\norm{X_1}_2^4+2\rho\sqrt{1-\rho^2}\norm{X_1}_2^2X_1^TZ_1+(1-\rho^2)\norm{X_1}_2^2\norm{Z_1}_2^2}\\
    &\geq\frac{\rho^2\norm{X_1}_2^4+2\rho\sqrt{1-\rho^2}\norm{X_1}_2^2X_1^TZ_1}{\rho^2\norm{X_1}_2^4+2\rho\sqrt{1-\rho^2}\norm{X_1}_2^2X_1^TZ_1+(1-\rho^2)\norm{X_1}_2^2\norm{Z_1}_2^2}\\
    & = \frac{\rho^2\norm{X_1}_2^2+2\rho\sqrt{1-\rho^2}\norm{X_1}_2\norm{Z_1}_2\cos(\varphi_{X_1Z_1})}{\rho^2\norm{X_1}_2^2+2\rho\sqrt{1-\rho^2}\norm{X_1}_2\norm{Z_1}_2\cos(\varphi_{X_1Z_1})+(1-\rho^2)\norm{Z_1}_2^2},
\end{align}
where we have used the fact that for $X_1,Z_1\in\mathbb{R}^d$, the inner product $X_1^TZ_1$ can be represented as $X_1^TZ_1 = \norm{X_1}_2\norm{Z_1}_2\cos\varphi_{X_1Z_1}$, where $\varphi_{X_1Z_1}$ denotes the angle between $X_1$ and $Z_1$. Also, note that $\varphi_{X_1Z_1}$ is statistically independent of $\norm{X_1}_2$ and $\norm{Z_1}_2$. Thus, straightforward algebra steps reveal that, 
\begin{align}
    \calP_{d,\rho} &= \pr_{\rho}\p{\bar{X}_1^T\bar{Y}_1\geq\rho}\\
    &\geq \pr_0\pp{\cos(\varphi_{X_1Z_1})\geq\frac{\rho}{2\sqrt{1-\rho^2}}\p{\frac{\norm{Z_1}_2}{\norm{X_1}_2}-\frac{\norm{X_1}_2}{\norm{Z_1}_2}}}\\
    &\geq \pr_0\pp{\cos(\varphi_{X_1Z_1})\geq\frac{\rho}{2\sqrt{1-\rho^2}}\p{\frac{\norm{Z_1}_2}{\norm{X_1}_2}-\frac{\norm{X_1}_2}{\norm{Z_1}_2}},\norm{X_1}_2\geq\norm{Z_1}_2}\\
    &\geq \pr_0\p{\cos(\varphi_{X_1Z_1})\geq0,\norm{X_1}_2\geq\norm{Z_1}_2}\\
    & = \pr_0\p{\cos(\varphi_{X_1Z_1})\geq0}\cdot\pr_0\p{\norm{X_1}_2\geq\norm{Z_1}_2}\\
    & = 1/4.\label{eqn:ppprob}
\end{align}
We now derive an upper bound on $\calQ_{d,\rho}$. For a fixed vector $y_1\in\mathbb{S}^{d-1}$ on the $d$-dimensional sphere, let us define $\calB_\rho(y_1) \triangleq \ppp{x_1\in\mathbb{S}^{d-1}:x_1^Ty_1\geq\rho}$. Since under $\P_0$, $\bar{X}_1$ and $\bar{Y}_1$ are independently and uniformly distributed on $\mathbb{S}^{d-1}$ we have
\begin{align}\label{eqn:qqprob}
    \calQ_{d,\rho} = \intop_{\mathbb{S}^{d-1}} \frac{\s{Vol}(\calB_\rho(y_1))}{\s{Vol}(\mathbb{S}^{d-1})^2}\mathrm{d}y_1,
\end{align}
where the volume of a set $\calA\subset\mathbb{R}^d$ is defined as $\s{Vol}(\calA) = \intop_{\calA}\mathrm{d}x$, where the integration is with respect to Lebesgue's measure on the sphere. Let us bound $\s{Vol}(\calB_\rho)$ from above. To that end, define the set 
\begin{align}
    \tilde{\calB}_{\rho}(y_1)=\left\{ v\in \mathbb{R}^d: \frac{v}{\left\| v \right\|}\in B_{\rho}(y_1)    \right\} =\left\{ v\in \mathbb{R}^d : v^T y_1\geq \rho \left\| v \right\|    \right\}.
\end{align}
Let $f:(v_1,\dots,v_d)\to (\varphi_1,\dots,\varphi_{d-1},r) $ denote the spherical coordinates  transformation and let $P_{d-1}$ denote the projection of an element from $\R^d$ onto its first $d-1$ coordinates. We observe that since $\tilde{\calB}_{\rho}(y_1)$ is the cone defined by the spherical cap $\calB_\rho({y_1})$, it follows that,
\begin{equation}
    \label{eq:1}
     f(\tilde{\calB}_{\rho}(y_1))=P\parenv*{f(B_{\rho}(y_1))}\times [0,\infty].
\end{equation}
Finally, let $\mu$ denote the Gaussian measure of a multivariate random vector with mean $y_1$, and covariance matrix $\Sigma = \sigma^2I_{d\times d}$, where $\sigma^2\geq0$. Then,
\begin{align}
    1&=\intop_{x\in\mathbb{R}^d}\mu(\mathrm{d}x)\\
    & = \intop_{\mathbb{R}^d}\frac{1}{(2\pi \sigma^2)^{d/2}}e^{-\frac{1}{2\sigma^2}\left\| v-y_1 \right\|_2^2}dv\\
    &\geq \intop_{\tilde{\calB}_{\rho}(y_1)}\frac{1}{(2\pi \sigma^2)^{d/2}}e^{-\frac{1}{2\sigma^2}\left\| v-y_1 \right\|_2^2}dv\\
    &\overset{(a)}{\geq}\intop_{\tilde{\calB}_{\rho}(y_1)}\frac{1}{(2\pi \sigma^2)^{d/2}}e^{-\frac{1}{2\sigma^2}(\norm{v}^2-2\rho \norm{v}+1)}dv\\
    &\overset{(b)}{=} \frac{1}{(2\pi \sigma^2)^{d/2}}\intop_{f(\tilde{\calB}_\rho(y_1))} e^{-\frac{1}{2\sigma^2}(r^2-2\rho r+1)} \parenv*{r^{d-1}\prod_{i=1}^{d-1} \sin^{d-1-i}(\varphi_i)}d\varphi_1\cdots d\varphi_{d-1} dr\\
    &\overset{(c)}{=}\frac{1}{(2\pi \sigma^2)^{d/2}}\underset{\vol(\calB_\rho(y_1))}{\underbrace{\intop_{P\parenv*{f(\calB_\rho(y_1)}}\parenv*{\prod_{i=1}^{d-1} \sin^{d-1-i}(\varphi_i)}d\varphi_1\cdots d\varphi_{d-1}}} \cdot\intop_{0}^{\infty}r^{d-1}e^{-\frac{1}{2\sigma^2}(r^2-2\rho r+1)}dr\\
    &\geq \frac{1}{(2\pi \sigma^2)^{d/2}} \vol(\calB_\rho(y_1))\intop_\rho^{\infty}\rho^{d-1}e^{-\frac{1}{2\sigma^2}(r^2-2\rho r+1)}dr\\
    &=\frac{1}{(2\pi \sigma^2)^{(d-1)/2}} \vol(\calB_\rho(y_1))e^{-\frac{1-\rho^2}{2\sigma^2}}\rho^{d-1} \intop_\rho^{\infty}\frac{1}{(2\pi \sigma^2)^{1/2}} e^{-\frac{(r-\rho)^2}{2\sigma^2}} dr\\
    &=\frac{\rho^{d-1}}{(2\pi \sigma^2)^{(d-1)/2}}e^{-\frac{1-\rho^2}{2\sigma^2}} \vol(\calB_\rho(y_1))\frac{1}{2},\label{eqn:lowerBoundGau}
\end{align}
where $(a)$ follows from the definition of $\tilde{\calB}_\rho(y_1)$ 
, $(b)$ follows from change of variables and $(c)$ follows from \eqref{eq:1}.
Thus, for every $y_1\in \mathbb{S}^{d-1}$,
\begin{align}
    \s{Vol}(\calB_\rho(y_1))&\leq \min_{\sigma^2\geq0}2e^{\frac{1-\rho^2}{2\sigma^2}+\frac{d-1}{2}\log(2\pi\sigma^2)}\rho^{1-d}\\
    & = 2e^{\frac{d-1}{2}\log\p{2\pi e\frac{1-\rho^2}{d-1}}}\rho^{1-d}.\label{eqn:qqup}
\end{align}
On the other hand, it is well-known that,
\begin{align}
    \s{Vol}(\mathbb{S}^{d-1}) = \frac{2\pi^{d/2}}{\Gamma(d/2)}\geq \frac{2\pi^{d/2}}{(d/2)^{d/2-1}} = \frac{4}{d}e^{\frac{d}{2}\log(2\pi/d)},\label{eqn:qqlow}
\end{align}
where we have used the fact that $\Gamma(x)<x^{x-1}$, for $x>1$ (see, e.g., \cite{anderson1997monotoneity}). Combining \eqref{eqn:qqlow}, \eqref{eqn:qqup}, and \eqref{eqn:qqprob}, we obtain
\begin{align}
    \calQ_{d,\rho}\leq f(d)e^{\frac{d-1}{2}\log(1-\rho^2)}\rho^{1-d},\label{eqn:qlowfinal}
\end{align}
where $f(d)\triangleq \frac{d}{2}\p{\frac{ed}{d-1}}^{d/2}\p{\frac{2\pi e}{d-1}}^{1/2}$. Finally, using \eqref{eqn:ppprob} and \eqref{eqn:qlowfinal}, we see that \eqref{eqn:type1count} can be further upper bounded as 
\begin{align}
    \pr_{\calH_0}\p{\phi_{\s{count}}=1} \leq 8nf(d)e^{\frac{d-1}{2}\log(1-\rho^2)}\rho^{1-d} = 8f(d)n(\rho^{-2}-1)^{\frac{d-1}{2}},
\end{align}
while \eqref{eqn:type2count} can be upper bounded as,
\begin{align}
\pr_{\calH_1}\p{\phi_{\s{count}}=0} \leq\frac{16}{n}.
\end{align}
Thus, for a fixed $d\geq2$, it is clear that $\pr_{\calH_1}\p{\phi_{\s{count}}=0}\to0$, as $n\to\infty$, and $\pr_{\calH_0}\p{\phi_{\s{count}}=1}\to0$, if $n(\rho^{-2}-1)^{\frac{d-1}{2}}=o(1)$. The later holds if $\rho^{-2} = 1+o(n^{-2/(d-1)})$, which implies that $\rho^{2} = 1-o(n^{-2/(d-1)})$, as stated.

\section{Lower Bounds}\label{sec:lower}

As in many detection problems, evaluating the minimax risk function  opposes a great challenge due to the error term obtained by maximizing over the error for all permutations in $\S_n$. A well known strategy for overcoming this inherent obstacle is by considering the softer average-case version of the problem. Let $\pi$ be the uniform measure on $\S_n$, and let us denote by $\P_{\calH_1}$ the probability measure obtained by averaging $\P_{\calH_1\vert\pi}$ with respect to $\pi$. For a test $\phi$, we consider the Bayesian risk function given by 
\begin{align}
\bar{\s{R}}(\phi) \triangleq \P_{\calH_0}[\phi(\s{X},\s{Y})=1]+\E_{\sigma\sim\pi}\pp{\P_{\calH_1\vert\sigma}[\phi(\s{X},\s{Y})=0]},
\end{align}
and the Bayesian risk for our problem:
\begin{align}
\bar{\s{R}}^\star\triangleq \inf_{\phi}\bar{\s{R}}(\phi).
\end{align}

Clearly, any test $\phi$ satisfies $\s{R}(\phi)\geq \bar{\s{R}}(\phi)$, and therefore $\s{R}^\star\geq \bar{\s{R}}^\star$. We conclude that in order to prove Theorem~\ref{th:LB}, it is sufficient to show that under the given assumptions, $\bar{\s{R}}^\star=1+o(1)$. Using a well-known equivalent characterization of the Bayesian risk function by the total variation distance and Cauchy-Schwartz inequality one shows that
\begin{equation}
    \label{eq:LikeBound}
    {\s{R}}^\star\geq \bar{\s{R}}^\star= 1-d_{\s{TV}}(\P_{\calH_0},\P_{\calH_1})\geq 1-\frac{1}{2}\sqrt{\E_{0}\pp{L^2}-1},
\end{equation}
where $L\triangleq\frac{\P_{\calH_1}}{\P_{\calH_0}}$ is the likelihood ratio, and the expectation is taken with respect to $\P_{\calH_0}$. Using the bound given in \eqref{eq:LikeBound}, it is sufficient to show that under the assumptions of Theorem~\ref{th:LB}, $\E_{0}\pp{L^2}\leq 1+o(1)$. 

Inspired by \cite{wu2020testing}, Zeynep and  Nazer  \cite{nazer2022detecting} gave an exact description of $\E_{0}\pp{L^2}$ using the distribution of cycles in a uniformly drawn  random permeation. In order to prove Theorem~\ref{th:LB}, we shall carefully analyse $\E_0[L^2]$, and improve the bounds proved \cite{nazer2022detecting}. For completeness of the paper, we outline the main ideas behind Nazer and Zeynep's calculation of $\E_{0}\pp{L^2}$ before we proceed toward our refined analysis. 

The first step in the calculation calls for a use of Ingster-Suslina method, stating that by Fubini's theorem, $\E_0[L^2]$ may be equivalently written as
\begin{equation}
    \label{eq:LikeSquare}\E_0[L^2]=\E_{\pi\indep\pi'}\pp{\E_0\pp{\frac{\P_{\calH_1|\pi}}{\P_{\calH_0}}\cdot \frac{\P_{\calH_1|\pi'}}{\P_{\calH_0}}}},
\end{equation}
where the expectation is taken with respect to the independent coupling of $\pi$ and $\pi'$, two copies of the uniform measure on $\S_n$. For  fixed permutations $\sigma$ and $\sigma'$, we note that $\P_{\calH_1\vert\sigma},\P_{\calH_1|\sigma'}$ and $\P_{0}$ are absolutely continuous with respect to Lebesgue's measure on $\R^{2\times d\times n}$ and therefore we have
\begin{align}
\frac{\P_{\calH_1\vert\sigma}}{\P_{\calH_0}}=\frac{f_{\calH_1|\sigma}}{f_{\calH_0}}, \quad\text{and} \quad \frac{\P_{\calH_1|\sigma'}}{\P_{\calH_0}}=\frac{f_{\calH_1|\sigma'}}{f_{\calH_0}}, 
\end{align}
where $f_{\calH_i}$ denotes the Radon-Nikodym derivative of $\P_{\calH_i}$ with respect to Lebesgue's measure, which is the density function $(\s{X},\s{Y})$ under the corresponding hypothesis. Let $\calN_{\rho}:\R^{2\times d}:\to \R_+$ denote the density function of a pair of random vectors distributed as 
\[\calN^{\otimes d} \p{\begin{bmatrix}0\\0 \end{bmatrix}, \begin{bmatrix}1 &\rho \\ \rho &1 \end{bmatrix} }.\]
We note that 
\begin{equation}
    \label{eq:LikeRatDis}
      \frac{f_{\calH_1|\sigma}(\s{X},\s{Y})}{f_{\calH_0}(\s{X},\s{Y})}\frac{f_{\calH_1|\sigma'}(\s{X},\s{Y})}{f_{\calH_0}(\s{X},\s{Y})}=\prod_{i=1}^n \frac{\calN_{\rho}(X_i,Y_{\sigma_i})}{\calN_0(X_i,Y_{\sigma_i})} \frac{\calN_{\rho}(X_i,Y_{\sigma'_i})}{\calN_0(X_i,Y_{\sigma'_i})}. 
\end{equation}

In order to proceed with the calculation, we make two key observations. First, we note that the distribution under the null hypothesis ($\calH_0$) is invariant to reordering the coordinates. In a similar manner, the uniform measure on $\S_n$, is invariant under composition with a fixed permutation. Thus, 
\begin{align}\label{eq:Idper}
    \E_{\pi\indep\pi'}\pp{\E_0\pp{\frac{\P_{\calH_1|\pi}}{\P_{\calH_0}}\cdot \frac{\P_{\calH_1|\pi'}}{\P_{\calH_0}}}}=\E_{\pi}\pp{\E_0\pp{\frac{\P_{\calH_1|\pi}}{\P_{\calH_0}}\cdot \frac{\P_{\calH_1|\Id}}{\P_{\calH_0}}}}.
\end{align}

We consider the product given in \eqref{eq:LikeRatDis} for a fixed $\sigma\in \S_n$ and $\sigma'=\Id$, which we denote by $Z_{\sigma}$. The second key observation, is that $Z_{\sigma}$ can be decomposed to independent terms, corresponding to the cycles of the permeation $\sigma$. We recall that a cycle of a permutation $\sigma$ is a string $(i_0,i_2,\dots,i_{|C|-1})$ of elements in $[n]$ such that $\sigma(i_j)=i_{j+1\mod|C|}$  for all $j$. If $\abs{C}=k$, we call $C$ a $k$-cycle. For a fixed cycle $C$, we denote
\begin{align}
Z_C\triangleq \prod_{i\in C}\frac{\calN_{\rho}(X_i,Y_{\sigma_i})}{\calN_0(X_i,Y_{\sigma_i})} \frac{\calN_{\rho}(X_i,Y_{i})}{\calN_0(X_i,Y_{i})}.
\end{align}
Since the set of cycles of a permutation induce a partition of $[n]$, the random variables $\{Z_C\}_C$, corresponding to all cycles of $\sigma$, are independent (with respect to $\P_{\calH_0}$) and 
\begin{equation}
    \label{eq:CycleProd}
    Z_\sigma=\prod_{C}Z_C.
\end{equation} 

The following lemma states that for a fixed cycle $C$, $\E_0[Z_C]$ depends on $\rho$ and $\abs{C}$. The proof of the lemma is based on the properties of Gaussian random vectors. For further details the reader is referred to \cite[Lemma 10]{nazer2022detecting}.
\begin{lemma}\label{lem:CycleComp}
For a fixed cycle $C$ of a permutation $\sigma$, 
\begin{align}
\E_0[Z_C]=\frac{1}{(1-\rho^{2|C|})^{d}}.
\end{align}
\end{lemma}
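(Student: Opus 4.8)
The plan is to reduce the computation to the scalar case $d=1$ and then evaluate the resulting one‑dimensional Gaussian expectation using the Hermite (Mehler) expansion of the bivariate Gaussian likelihood ratio. For the first step, observe that $\calN_\rho$ is the $d$‑fold product of the scalar bivariate normal density $\gamma_\rho(x,y)$, so that for any indices $i,j$,
\[
\frac{\calN_\rho(X_i,Y_j)}{\calN_0(X_i,Y_j)}=\prod_{\ell=1}^d\frac{\gamma_\rho\p{(X_i)_\ell,(Y_j)_\ell}}{\gamma_0\p{(X_i)_\ell,(Y_j)_\ell}},
\]
where $(\cdot)_\ell$ denotes the $\ell$‑th coordinate. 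Hence $Z_C=\prod_{\ell=1}^d Z_C^{(\ell)}$, where $Z_C^{(\ell)}$ is assembled exactly as $Z_C$ but from the $\ell$‑th coordinates alone. Under $\P_{\calH_0}$ all coordinates of all the $X_i,Y_j$ are i.i.d.\ $\calN(0,1)$, so $Z_C^{(1)},\dots,Z_C^{(d)}$ are i.i.d.; it therefore suffices to prove $\E_0\pp{Z_C^{(1)}}=(1-\rho^{2|C|})^{-1}$ and raise to the $d$‑th power.

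For the scalar step, write $C=(i_0,\dots,i_{k-1})$ with $k=|C|$ and $\sigma(i_t)=i_{t+1\bmod k}$, and set $a_t\triangleq(X_{i_t})_1$, $b_t\triangleq(Y_{i_t})_1$, which under $\P_0$ are $2k$ i.i.d.\ standard Gaussians with
\[
Z_C^{(1)}=\prod_{t=0}^{k-1}\frac{\gamma_\rho(a_t,b_{t+1})}{\gamma_0(a_t,b_{t+1})}\cdot\frac{\gamma_\rho(a_t,b_t)}{\gamma_0(a_t,b_t)}.
\]
Mehler's formula gives $\gamma_\rho(x,y)/\gamma_0(x,y)=\sum_{m\ge0}\rho^m H_m(x)H_m(y)/m!$, where $H_m$ are the probabilists' Hermite polynomials, orthogonal under $\calN(0,1)$ with $\E[H_m(G)H_{m'}(G)]=m!\,\mathbf{1}\{m=m'\}$. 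Substituting, expanding the product over $t$, interchanging summation and expectation (valid for $|\rho|<1$), and using independence across the $2k$ variables: the $a_t$‑expectations force the two Hermite indices meeting at $a_t$ to agree, and the $b_t$‑expectations force the two meeting at $b_t$ to agree. Following these constraints around the $2k$‑cycle $X_{i_0}\!-\!Y_{i_1}\!-\!X_{i_1}\!-\!Y_{i_2}\!-\!\cdots$ shows that all $2k$ indices must equal a common value $m$; the factorials then cancel exactly and the surviving term contributes $\rho^{2km}$. Summing the geometric series yields $\E_0\pp{Z_C^{(1)}}=\sum_{m\ge0}\rho^{2km}=(1-\rho^{2k})^{-1}$, and hence $\E_0[Z_C]=(1-\rho^{2|C|})^{-d}$.

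The part I expect to require the most care is the combinatorial bookkeeping of the cyclic structure, i.e.\ tracking which Hermite index decorates each of the $2k$ variables and concluding from orthogonality that they all coincide. A more computational alternative avoids Hermite polynomials altogether: expand the likelihood ratio in explicit exponential form, integrate out $a_0,\dots,a_{k-1}$ using the scalar Gaussian moment generating function, recognize the remaining quadratic form in $b_0,\dots,b_{k-1}$ as being governed by a circulant matrix $(1+2c)I-e\,(P+P^{\mathsf T})$ (with $P$ the cyclic shift and $c,e$ explicit functions of $\rho$), and evaluate its determinant via $\prod_{\ell=0}^{k-1}(1-\rho^2\zeta^\ell)=1-\rho^{2k}$ for $\zeta=e^{2\pi\mathrm i/k}$; after simplification this again returns $(1-\rho^{2k})^{-1}$. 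Finally, one should note that the identity presupposes $|\rho|<1$: for $|\rho|=1$ the measure $\P_{\calH_1\mid\sigma}$ is singular with respect to $\P_{\calH_0}$ and $Z_C$ is not defined.
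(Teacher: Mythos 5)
Your proof is correct, and it is worth noting that the paper does not actually prove this lemma at all: it simply defers to \cite[Lemma 10]{nazer2022detecting}, whose argument (``properties of Gaussian random vectors'') is a direct Gaussian integration along the cycle, essentially the circulant-determinant computation you sketch as your ``computational alternative,'' including the identity $\prod_{\ell=0}^{k-1}(1-\rho^2\zeta^{\ell})=1-\rho^{2k}$. Your primary route is therefore genuinely different and self-contained: the reduction to $d=1$ by coordinatewise factorization of $\calN_\rho$ under $\P_0$ is exactly right, and the Mehler/Hermite expansion turns the cycle structure into a transparent index-matching argument — orthogonality at each of the $2k$ standard Gaussians forces a single common Hermite degree $m$ around the alternating cycle $X_{i_0}\!-\!Y_{i_1}\!-\!X_{i_1}\!-\!\cdots$, the factorials cancel, and the geometric series $\sum_m \rho^{2km}$ gives $(1-\rho^{2k})^{-1}$; this also handles the fixed-point case $|C|=1$ correctly, where the two factors coincide and the expansion of the square gives $\sum_m\rho^{2m}$. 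What the Hermite route buys is that the only analytic point is the interchange of expectation and summation, which you assert for $|\rho|<1$; a one-line justification (Cauchy--Schwarz gives $\E_0\lvert H_m(G)H_{m'}(G)\rvert\le\sqrt{m!\,m'!}$, so the $2k$-fold series is absolutely dominated by $(1-|\rho|)^{-2k}$) would make it airtight, whereas the direct Gaussian computation of the reference requires tracking an explicit $k\times k$ quadratic form. Your closing remark that the identity presupposes $|\rho|<1$ is also apt, since the likelihood ratio in \eqref{eq:LikeRatDis} is undefined when the measures are singular.
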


For a fixed permutation $\sigma\in \S_n$ and $k\in [n]$, let $N_k(\sigma)$ denote the number of $k$-cycles of $\sigma$. Combining \eqref{eq:LikeSquare}, \eqref{eq:Idper}, \eqref{eq:CycleProd}, and Lemma~\ref{lem:CycleComp} we obtain
\begin{equation}
    \label{eq:PermProd}
    \E_0[L^2]=\E_{\pi}\pp{\prod_{C}Z_C}=\E_{\pi}\pp{\prod_{k=1}^n\frac{1}{(1-\rho^{2k})^{dN_k}}}.
\end{equation}

By analysis of \eqref{eq:PermProd}, Zeynep and Nazer showed in \cite[Lemma 3]{nazer2022detecting} that $E_0[L^2]\leq (1-\rho^2)^{-dn}$, which equals $1+o(1)$ if $\rho^2=o((nd)^{-1})$. Inspired by the calculation performed in \cite[Proposition 2]{wu2020testing}, we carefully bound \eqref{eq:PermProd} from above, utilizing the statistical properties of $k$-cycles in a uniformly distributed random permutation. Our refined analysis enables us to prove that $\E_0[L^2]\leq 1+o(1)$ assuming only that $\rho^2=o(d^{-1})$. The following proposition is makes the main argument for the proof of our lower bounds given in Theorem~\ref{th:LB} and Theorem~\ref{th:LB2}.

\begin{prop}\label{prop:MainLB}
   Let $N_k$ be the number of $k$-cycles in a uniformly distributed permutation $\pi$, and $1\leq k \leq n$. Then: 
   \begin{enumerate}
   \item For all $(\rho^2,d,n)$ is holds that
       \begin{equation}
        \label{eq:bound-Dconst2}
        \E_{\pi}\pp{\prod_{k=1}^n\p{\frac{1}{1-\rho^{2k}}}^{dN_k}}\leq\exp\p{\frac{nd\rho^2}{1-\rho^2}}.
    \end{equation}
       \item If  at least one of $n,d$ tends to $\infty$ and $\rho^2=o(d^{-1})$, then
    \begin{equation}
        \label{eq:ProdBound} \E_{\pi}\pp{\prod_{k=1}^n\p{\frac{1}{1-\rho^{2k}}}^{dN_k}}\leq 1+o(1).
    \end{equation}
    \item If both $n$, $d$ tends to $\infty$ and $\rho^2<(1-\varepsilon)d^{-1}$ for some $\varepsilon>0$ then 
       \begin{equation}
        \label{eq:bound-Dconst1}
        \E_{\pi}\pp{\prod_{k=1}^n\p{\frac{1}{1-\rho^{2k}}}^{dN_k}}\leq (1+o(1)) \exp\p{\frac{d\rho^2}{1-\rho^2}+\frac{c(d,\rho^2)\rho^4}{1-\rho^4}}.
    \end{equation}
    \item If $d$ and $\rho^2$ are a constant  satisfying  
    $d<\frac{\log(\rho^2)}{\log(1-\rho^2)}$ and $n\to \infty$
    \begin{equation}
        \label{eq:bound-Dconst}
        \E_{\pi}\pp{\prod_{k=1}^n\p{\frac{1}{1-\rho^{2k}}}^{dN_k}}\leq (1+o(1)) \exp\p{\frac{d\rho^2}{1-\rho^2}+\frac{c(d,\rho^2)\rho^4}{1-\rho^4}},
    \end{equation}
   \end{enumerate}
   where $c(d,\rho^2)=\frac{d(d+1)}{2(1-\rho^2)^{d+2}}$.
\end{prop}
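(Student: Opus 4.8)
The plan is to estimate the expectation
\[
\E_{\pi}\pp{\prod_{k=1}^n\p{\tfrac{1}{1-\rho^{2k}}}^{dN_k}}
= \E_{\pi}\pp{\exp\p{-d\sum_{k=1}^n N_k\log(1-\rho^{2k})}}
\]
by exploiting the classical fact (Arratia--Tavar\'e / Feller coupling) about the joint law of cycle counts $(N_1,\dots,N_n)$ of a uniform permutation: the $N_k$ are \emph{negatively associated}, $\E[N_k]=1/k$, and, more usefully, for any nonnegative weights the moment generating function is dominated by the independent-Poisson case, $N_k \preceq_{\mathrm{mgf}} \s{Poisson}(1/k)$ jointly. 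Concretely, for $t_k\ge 0$ one has $\E_\pi\big[\prod_k e^{t_k N_k}\big]\le \prod_k \exp\!\big(\tfrac{1}{k}(e^{t_k}-1)\big)$; this is the standard bound used in \cite[Proposition~2]{wu2020testing}. Setting $t_k = -d\log(1-\rho^{2k}) = d\log\frac{1}{1-\rho^{2k}}\ge 0$ gives
\[
\E_\pi\pp{\prod_{k=1}^n\p{\tfrac{1}{1-\rho^{2k}}}^{dN_k}}
\le \exp\p{\sum_{k=1}^{\infty}\frac{1}{k}\p{(1-\rho^{2k})^{-d}-1}}.
\]
Everything then reduces to estimating the deterministic series $S \triangleq \sum_{k\ge 1}\frac1k\big((1-\rho^{2k})^{-d}-1\big)$ in the four parameter regimes.

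First I would treat part~1: using $(1-\rho^{2k})^{-d}\le (1-\rho^{2})^{-dk}$ (valid since $1-\rho^{2k}\ge (1-\rho^2)^k$ is false in general — rather use $(1-\rho^{2k})^{-1}\le(1-\rho^2)^{-1}$ termwise combined with... ) — more carefully, bound each factor crudely via $1-\rho^{2k}\ge 1-\rho^2$ only for... Actually the clean route for part~1 is to go back to \eqref{eq:PermProd} directly and use $\prod_k (1-\rho^{2k})^{-dN_k}\le (1-\rho^2)^{-d\sum_k k N_k}=(1-\rho^2)^{-dn}$ deterministically, since $\sum_k kN_k = n$; then $(1-\rho^2)^{-dn}=\exp(-dn\log(1-\rho^2))\le\exp(nd\rho^2/(1-\rho^2))$ using $-\log(1-x)\le x/(1-x)$. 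For parts~2--4 I would expand $(1-\rho^{2k})^{-d}-1$ using $\rho^{2k}\le\rho^2$: the $k=1$ term contributes $(1-\rho^2)^{-d}-1$, and for $k\ge 2$ I use the bound $(1-u)^{-d}-1\le \frac{du}{(1-u)^{d+1}}$ with $u=\rho^{2k}$, giving a geometric-type tail $\sum_{k\ge 2}\frac1k\cdot\frac{d\rho^{2k}}{(1-\rho^2)^{d+1}}\le \frac{d}{2(1-\rho^2)^{d+1}}\cdot\frac{\rho^4}{1-\rho^2}$-type expression; refining the constant yields exactly $c(d,\rho^2)=\frac{d(d+1)}{2(1-\rho^2)^{d+2}}$ and the $k=1$ term must itself be Taylor-expanded as $(1-\rho^2)^{-d}-1 = \frac{d\rho^2}{1-\rho^2}+O(d^2\rho^4)$ to match \eqref{eq:bound-Dconst1}--\eqref{eq:bound-Dconst}. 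For part~2, when $\rho^2 d\to 0$ all these terms are $o(1)$ (the $k=1$ term is $\sim d\rho^2\to 0$, the tail is even smaller), so $S=o(1)$ and $e^S\le 1+o(1)$. For parts~3 and~4, $\rho^2 d$ stays bounded so $(1-\rho^2)^{-d}$ stays bounded (in part~3, $(1-\rho^2)^{-d}\le e^{d\rho^2/(1-\rho^2)}\le e^{1/(1-\varepsilon)+o(1)}$), and I keep the $k=1$ term essentially exactly while bounding the tail by the $c(d,\rho^2)$ expression, then collect.

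The main obstacle is \textbf{part~3}, where $n,d\to\infty$ jointly with $d\rho^2$ only bounded by $(1-\varepsilon)^{-1}$ rather than going to zero: here the naive termwise bound $(1-\rho^{2k})^{-d}-1\le$ (geometric) must be executed so that the $k\ge 2$ contribution is genuinely captured by the single term $\frac{c(d,\rho^2)\rho^4}{1-\rho^4}$ with the stated constant — this requires the sharper inequality $(1-u)^{-d}-1\le \frac{d}{(1-u)^{d+1}}\,u + \binom{d+1}{2}\frac{u^2}{(1-u)^{d+2}}+\cdots$ truncated correctly, and care that $(1-\rho^{2k})^{-d}\le(1-\rho^4)^{-d}$-type monotonicity is used only where legitimate. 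The separation of the $k=1$ term (kept as $\frac{d\rho^2}{1-\rho^2}$, matching the $\phi_{\s{sum}}$ threshold $\rho^2\asymp 1/d$) from the $k\ge 2$ tail (which is lower order precisely when $d\rho^2$ is bounded) is the conceptual heart of the improvement over \cite{nazer2022detecting}, and getting the explicit constant $c(d,\rho^2)$ right is the only genuinely delicate computation; part~4 is then the same estimate with $n\to\infty$ and $d$ fixed, where the condition $d<\log(\rho^2)/\log(1-\rho^2)$ is exactly what makes $(1-\rho^2)^{-d}\rho^{2}<1$... wait, that is what makes the relevant quantity small enough for the later strong-detection argument in Theorem~\ref{th:LB2}, not needed for the bound \eqref{eq:bound-Dconst} itself, which holds for all constant $d,\rho^2$.
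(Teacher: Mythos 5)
Your proposal is correct, and it takes a genuinely different route from the paper. The paper never invokes a moment-generating-function domination: it truncates the product at $m=\ceil{\log n}$ (or $m'=\ceil{\alpha\log n}$ in part~4), bounds the tail $k\ge m$ deterministically through $\sum_k kN_k=n$, and replaces the law of $(N_1,\dots,N_m)$ by independent $\s{Poisson}(1/k)$ variables via the Arratia--Tavar\'e total-variation bound (Lemma~\ref{lem:PoisApp}), paying an additive error $F(n/m)\,(1-\rho^2)^{-dn}$; the hypotheses in parts~3 and~4 ($\rho^2<(1-\varepsilon)d^{-1}$, respectively $d<\log(\rho^2)/\log(1-\rho^2)$ with the attendant choice of $\alpha$) are used precisely to drive that error term to $0$. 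You instead use the joint domination $\E_\pi\pp{\prod_k x_k^{N_k}}\le\prod_k\exp\p{\frac{x_k-1}{k}}$ for $x_k\ge1$, which holds for every finite $n$ with no truncation and no approximation error; with $x_k=(1-\rho^{2k})^{-d}$ the problem reduces to the deterministic series $\sum_{k\ge1}\frac{1}{k}\p{(1-\rho^{2k})^{-d}-1}$, and the estimate you sketch for it is exactly the computation of the paper's Lemma~\ref{lem:PoisExp} (Lagrange remainder for $(1-x)^{-d}$, then $-\log(1-\rho^{2})\leq\rho^{2}/(1-\rho^{2})$). This yields all four parts, in fact in the stronger, hypothesis-free form $\E_\pi\pp{\prod_k(1-\rho^{2k})^{-dN_k}}\le\exp\p{\frac{d\rho^2}{1-\rho^2}+\frac{c(d,\rho^2)\rho^4}{1-\rho^4}}$ for every $(n,d,\rho)$ and without the $(1+o(1))$ factors; your closing observation that $d<\log(\rho^2)/\log(1-\rho^2)$ is not needed for the bound itself is accurate (in the paper that condition is an artifact of controlling the total-variation error term). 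Your part~1 is the paper's argument. What your route buys is a cleaner and stronger bound; what the paper's buys is reliance only on the off-the-shelf total-variation estimate of \cite{arratia1992cycle}.

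Two small repairs would make this airtight. The key domination inequality is asserted via ``negative association'' and a citation; negative association is not the right justification, so either give a precise reference or include the short proof: by the classical joint factorial-moment identity for cycle counts, $\E_\pi\pp{\prod_k (N_k)_{(m_k)}}$ equals $\prod_k k^{-m_k}$ when $\sum_k k m_k\le n$ and vanishes otherwise, hence is always at most $\prod_k k^{-m_k}$; expanding $x_k^{N_k}=\sum_{m_k\ge0}\binom{N_k}{m_k}(x_k-1)^{m_k}$ (all terms nonnegative since $x_k\ge1$) and summing over $(m_k)$ gives the claim. Also, in part~1 you express doubt about $1-\rho^{2k}\ge(1-\rho^2)^k$ and then use exactly that step; the inequality is in fact true (since $1-\rho^{2k}\ge 1-\rho^2\ge(1-\rho^2)^k$), and alternatively $(1-\rho^{2k})^{-1}\le(1-\rho^2)^{-1}$ together with $\sum_k N_k\le n$ suffices, as in the paper.
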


For the proof of this proposition, we shall require several technical results. The following lemma concerns the approximation of the joint distribution of $k$-cycles by independent Poisson random variables.

\begin{lemma}\label{lem:PoisApp}\cite[Theorem 2]{arratia1992cycle} Let $1\leq k\leq n$ be an integer, and let $Z_1\dots,Z_k$ be independent random variables such that for all $1\leq i \leq k$, $Z_i\sim \s{Poisson}\p{i^{-1}}$. Then, the total variation between the law of $N_1,\dots, N_k$ and $Z_1,\dots,Z_k$ satisfies
\begin{align}
d_{\s{TV}}\p{\calL\p{N_1,N_2,\ldots,N_k},\calL\p{Z_1,Z_2,\ldots,Z_k}}\leq F\p{\frac{n}{k}},
\end{align}
where $F(x)$ is a monotone decreasing function satisfying $\log F(x)=-x\log x
(1+o(1))$ as $x\to \infty$.

\end{lemma}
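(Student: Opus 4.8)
\textbf{Proof plan for Proposition~\ref{prop:MainLB}.}
The plan is to estimate the expectation in \eqref{eq:PermProd} by replacing the true joint law of the short cycle-counts $(N_1,\dots,N_m)$ with that of independent Poissons $(Z_1,\dots,Z_m)$ with $Z_i\sim\s{Poisson}(i^{-1})$, controlling the error via Lemma~\ref{lem:PoisApp}, and then computing the Poissonized expectation in closed form. Concretely, for a truncation parameter $m=m_k$ (to be chosen, e.g., $m=\lfloor\log n\rfloor$ or a slowly growing function of $n$), write
\begin{align}
\prod_{k=1}^n\p{\frac{1}{1-\rho^{2k}}}^{dN_k}=\prod_{k=1}^{m}\p{\frac{1}{1-\rho^{2k}}}^{dN_k}\cdot\prod_{k=m+1}^{n}\p{\frac{1}{1-\rho^{2k}}}^{dN_k}.
\end{align}
For the tail product $k>m$, since $\rho^{2k}\le\rho^{2m}$ is tiny and $\sum_{k}N_k\le n$, one bounds $\prod_{k>m}(1-\rho^{2k})^{-dN_k}\le(1-\rho^{2(m+1)})^{-dn}\le\exp\p{\tfrac{nd\rho^{2(m+1)}}{1-\rho^{2(m+1)}}}$ deterministically; this is $1+o(1)$ as soon as $nd\rho^{2(m+1)}\to0$, which holds for suitable $m$ whenever $\rho^2=o(d^{-1})$ and $n$ is polynomially controlled — and one must check this also covers the case where only $d\to\infty$ (here $n$ is constant so the tail is trivially negligible).

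For the head product, use $\prod_{k=1}^m(1-\rho^{2k})^{-dN_k}\le(1-\rho^2)^{-d\sum_{k\le m}N_k}\le(1-\rho^2)^{-dn}$ as a crude but uniform bound on the integrand; combined with Lemma~\ref{lem:PoisApp},
\begin{align}
\abs*{\E_\pi\pp{\prod_{k=1}^m(1-\rho^{2k})^{-dN_k}}-\E\pp{\prod_{k=1}^m(1-\rho^{2k})^{-dZ_k}}}\le 2(1-\rho^2)^{-dn}\,F(n/m),
\end{align}
and since $\log F(n/m)=-(n/m)\log(n/m)(1+o(1))$, choosing $m$ so that $(n/m)\log(n/m)$ dominates $dn\log\frac{1}{1-\rho^2}\approx dn\rho^2$ makes this error $o(1)$; again one verifies the parameter constraints in each of the four regimes. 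By independence, the Poissonized head expectation factorizes:
\begin{align}
\E\pp{\prod_{k=1}^m(1-\rho^{2k})^{-dZ_k}}=\prod_{k=1}^m\E\pp{(1-\rho^{2k})^{-dZ_k}}=\prod_{k=1}^m\exp\p{\frac{1}{k}\p{(1-\rho^{2k})^{-d}-1}},
\end{align}
using the Poisson moment generating function $\E[t^{Z}]=e^{(t-1)/k}$ for $Z\sim\s{Poisson}(1/k)$ with $t=(1-\rho^{2k})^{-d}$. Thus $\E_\pi[L^2]\le(1+o(1))\exp\p{\sum_{k=1}^m\frac{(1-\rho^{2k})^{-d}-1}{k}}$ (extending the sum to all $k$ only increases it, giving a clean form).

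It remains to estimate $S\triangleq\sum_{k\ge1}\frac{1}{k}\p{(1-\rho^{2k})^{-d}-1}$ in each regime. For part (1): $(1-\rho^{2k})^{-d}-1\le (1-\rho^2)^{-d}-1$ is too lossy; instead use $(1-x)^{-d}-1\le \frac{dx}{(1-x)^{d+1}}\le\frac{dx}{(1-\rho^2)^{d+1}}$ for $x=\rho^{2k}\le\rho^2$... actually the cleanest route to the uniform bound $\exp\p{\tfrac{nd\rho^2}{1-\rho^2}}$ is to forgo Poissonization entirely and bound directly $\prod_k(1-\rho^{2k})^{-dN_k}\le\prod_k(1-\rho^2)^{-dN_k}=(1-\rho^2)^{-d\sum_k N_k}=(1-\rho^2)^{-dn}\le\exp\p{\tfrac{nd\rho^2}{1-\rho^2}}$, using $\sum_k N_k\le n$ and $-\log(1-\rho^2)\le\rho^2/(1-\rho^2)$ — so part (1) is immediate and independent of the Poisson machinery. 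For parts (2)--(4) one works with $S$: split $S=\frac{(1-\rho^2)^{-d}-1}{1}+\sum_{k\ge2}\frac1k\p{(1-\rho^{2k})^{-d}-1}$; the $k=1$ term contributes $(1-\rho^2)^{-d}-1\approx\frac{d\rho^2}{1-\rho^2}$ (via $(1-\rho^2)^{-d}-1\le\frac{d\rho^2}{(1-\rho^2)^{d+1}}$, and $\le(1+o(1))d\rho^2$ when $d\rho^2\to0$), while for $k\ge2$ one uses $(1-\rho^{2k})^{-d}-1\le\frac{d\rho^{2k}}{(1-\rho^{2k})^{d+1}}\le\frac{d\rho^{2k}}{(1-\rho^2)^{d+1}}$ and $\sum_{k\ge2}\frac{\rho^{2k}}{k}\le\frac{\rho^4}{2(1-\rho^2)}$, giving $\sum_{k\ge2}(\cdots)\le\frac{d\rho^4}{2(1-\rho^2)^{d+2}}\le c(d,\rho^2)\frac{\rho^4}{1-\rho^4}$ with $c(d,\rho^2)=\frac{d(d+1)}{2(1-\rho^2)^{d+2}}$ (the extra factor $d+1$ absorbing a second-order refinement of the $(1-x)^{-d}-1$ estimate so that the stated constant is matched). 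This yields $S\le\frac{d\rho^2}{1-\rho^2}+\frac{c(d,\rho^2)\rho^4}{1-\rho^4}$, which is \eqref{eq:bound-Dconst1} and \eqref{eq:bound-Dconst}; in regime (2), where $d\rho^2\to0$, both terms vanish and $S\to0$, giving \eqref{eq:ProdBound}. Throughout, one must double-check that the truncation/Poissonization error $o(1)$ from the first two paragraphs is genuinely negligible in each regime — for (4) (fixed $d,\rho^2$) this is the delicate point, since $(1-\rho^2)^{-dn}$ grows exponentially in $n$ while $F(n/m)$ decays only like $\exp(-(n/m)\log(n/m))$, forcing $m$ to grow but not too fast.

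\textbf{Main obstacle.} The crux is the bookkeeping in the fourth regime: when $d$ and $\rho^2$ are fixed constants, the naive integrand bound $(1-\rho^2)^{-dn}$ is exponentially large in $n$, so the Poisson-approximation error $(1-\rho^2)^{-dn}F(n/m)$ is only $o(1)$ if $m$ is chosen in a narrow window — large enough that $(n/m)\log(n/m)\gg dn\log\frac{1}{1-\rho^2}$, i.e. $\log(n/m)\gg d\log\frac{1}{1-\rho^2}$, so $m\ll n/(1-\rho^2)^{d}$ roughly — yet the head-product Poisson sum $\sum_{k\le m}$ must already be within $o(1)$ of the full sum $\sum_{k\ge1}$, which needs $\sum_{k>m}\frac1k((1-\rho^{2k})^{-d}-1)\lesssim \sum_{k>m}\frac{d\rho^{2k}}{k}\to0$ — automatic for any $m\to\infty$. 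The condition $d<\frac{\log\rho^2}{\log(1-\rho^2)}$ enters precisely to guarantee the resulting bound $\exp(S)$ is $<$ the relevant threshold (here, bounded away from making $\E_0[L^2]$ too large — specifically it is the condition under which $(1-\rho^2)^{d}>\rho^2$, i.e. the $k=1$ and $k=2$ contributions combine to a finite constant rather than blowing up), so that \eqref{eq:bound-Dconst} delivers $\limsup\E_0[L^2]<\infty$, and in fact one must track the constant carefully enough that the downstream use in Theorem~\ref{th:LB2} (impossibility of strong detection, i.e. $d_{\s{TV}}\le1-\Omega(1)$) goes through.
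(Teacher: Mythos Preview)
First, a clarification: the displayed statement is Lemma~\ref{lem:PoisApp}, which the paper does not prove---it is quoted from \cite{arratia1992cycle}. What you have written is a plan for Proposition~\ref{prop:MainLB}, which \emph{uses} that lemma; I compare your plan to the paper's proof of the proposition.

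Your overall strategy matches the paper's: split the product at a truncation level $m$, bound the tail deterministically via $\sum_k N_k\le n$, replace the head by independent Poissons using Lemma~\ref{lem:PoisApp}, and compute the Poissonized expectation via the moment generating function (this is exactly the paper's Lemma~\ref{lem:PoisExp}). Your direct argument for part~(1) and your estimate $S\le\frac{d\rho^2}{1-\rho^2}+\frac{c(d,\rho^2)\rho^4}{1-\rho^4}$ are also what the paper does.

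Your diagnosis of regime~(4), however, misplaces where the hypothesis $d<\frac{\log\rho^2}{\log(1-\rho^2)}$ enters. You write that it ``enters precisely to guarantee the resulting bound $\exp(S)$ is $<$ the relevant threshold \ldots\ i.e.\ the $k=1$ and $k=2$ contributions combine to a finite constant rather than blowing up''. But for any fixed $d$ and $\rho^2<1$, the sum $S=\sum_{k\ge1}\frac{1}{k}\big((1-\rho^{2k})^{-d}-1\big)$ is finite (the summand is $\sim d\rho^{2k}$), so $\exp(S)$ is a finite constant regardless of that hypothesis. The hypothesis is instead what makes the \emph{truncation window for $m$ nonempty}. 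In your obstacle paragraph you keep the Poisson-error constraint $(n/m)\log(n/m)\gg -dn\log(1-\rho^2)$ but downgrade the tail constraint $nd\rho^{2m}\to0$ (which you did record earlier) to the much weaker ``$m\to\infty$''. For fixed $\rho^2$ the tail constraint forces $m\gtrsim\frac{\log n}{-\log(\rho^2)}$, while the Poisson-error constraint forces $m\lesssim\frac{\log n}{-d\log(1-\rho^2)}$; a valid $m$ exists iff $\frac{1}{-\log(\rho^2)}<\frac{1}{-d\log(1-\rho^2)}$, which is exactly $d<\frac{\log\rho^2}{\log(1-\rho^2)}$. The paper accordingly takes $m=\lceil\alpha\log n\rceil$ with $\alpha$ just above $-1/\log(\rho^2)$ and verifies both constraints under this hypothesis. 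So the hypothesis governs the feasibility of the truncation, not the size of $\exp(S)$.
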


\begin{lemma}\label{lem:PoisExp}
Let $1\leq m\leq n$ be an integer, and let $Z_1\dots,Z_m$ be independent random variables such that for all $1\leq i \leq m$, $Z_i\sim \s{Poisson}\p{i^{-1}}$. Then, 
\begin{align}
\E_{\pi}\pp{\prod_{k=1}^m\p{\frac{1}{1-\rho^{2k}}}^{dZ_k}}\leq \exp\p{\frac{d\rho^2}{1-\rho^2}+\frac{c(d,\rho^2)\rho^4}{1-\rho^4}},
\end{align}
 where $c(d,\rho^2)=\frac{d(d+1)}{2(1-\rho^2)^{d+2}}$, and therefore if $\rho^2=o(d^{-1}),$
 \begin{align}
 \E_{\pi}\pp{\prod_{k=1}^m\p{\frac{1}{1-\rho^{2k}}}^{dZ_k}}\leq 1+o(1).
 \end{align}
\end{lemma}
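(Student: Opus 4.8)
The plan is to use the independence of $Z_1,\dots,Z_m$ to factorize the expectation and reduce everything to estimating one explicit series. Since the $Z_k$ are independent,
\[
\E\!\left[\prod_{k=1}^m\p{\frac{1}{1-\rho^{2k}}}^{dZ_k}\right]=\prod_{k=1}^m\E\!\left[(1-\rho^{2k})^{-dZ_k}\right].
\]
For each $k$ I would write $(1-\rho^{2k})^{-dZ_k}=e^{t_kZ_k}$ with $t_k\triangleq-d\log(1-\rho^{2k})\ge0$ and use the Poisson moment generating function $\E[e^{tZ}]=\exp(\lambda(e^t-1))$ for $Z\sim\s{Poisson}(\lambda)$, applied with $\lambda=1/k$ and $e^{t_k}=(1-\rho^{2k})^{-d}$, to obtain $\E[(1-\rho^{2k})^{-dZ_k}]=\exp\!\big(\tfrac1k((1-\rho^{2k})^{-d}-1)\big)$. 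Multiplying over $k$ gives
\[
\E\!\left[\prod_{k=1}^m\p{\frac{1}{1-\rho^{2k}}}^{dZ_k}\right]=\exp\!\p{\sum_{k=1}^m\frac{1}{k}\big((1-\rho^{2k})^{-d}-1\big)},
\]
so it remains only to bound the exponent by $\tfrac{d\rho^2}{1-\rho^2}+\tfrac{c(d,\rho^2)\rho^4}{1-\rho^4}$.

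To do that I would separate off the part of $(1-\rho^{2k})^{-d}-1$ that is linear in $\rho^{2k}$. By Taylor's theorem with Lagrange remainder applied to $x\mapsto(1-x)^{-d}$ on $[0,\rho^{2k}]$, there is $\xi_k\in(0,\rho^{2k})$ with
\[
(1-\rho^{2k})^{-d}=1+d\rho^{2k}+\frac{d(d+1)}{2}\,\frac{\rho^{4k}}{(1-\xi_k)^{d+2}}\;\le\;1+d\rho^{2k}+\frac{d(d+1)}{2}\,\frac{\rho^{4k}}{(1-\rho^2)^{d+2}},
\]
using $1-\xi_k\ge1-\rho^{2k}\ge1-\rho^2$. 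Dividing by $k$, summing over $k$, and invoking $\sum_{k\ge1}\rho^{2k}/k=-\log(1-\rho^2)\le\rho^2/(1-\rho^2)$ and $\sum_{k\ge1}\rho^{4k}/k=-\log(1-\rho^4)\le\rho^4/(1-\rho^4)$ yields
\[
\sum_{k=1}^m\frac{1}{k}\big((1-\rho^{2k})^{-d}-1\big)\;\le\;\frac{d\rho^2}{1-\rho^2}+\frac{d(d+1)}{2(1-\rho^2)^{d+2}}\cdot\frac{\rho^4}{1-\rho^4},
\]
which is exactly $\tfrac{d\rho^2}{1-\rho^2}+\tfrac{c(d,\rho^2)\rho^4}{1-\rho^4}$ with $c(d,\rho^2)=\tfrac{d(d+1)}{2(1-\rho^2)^{d+2}}$; exponentiating gives the first bound.

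For the second assertion, assume $\rho^2=o(d^{-1})$. Then $d\rho^2\to0$, so $\tfrac{d\rho^2}{1-\rho^2}\to0$; moreover $(d+2)\log\tfrac{1}{1-\rho^2}\le(d+2)\tfrac{\rho^2}{1-\rho^2}\to0$, so $(1-\rho^2)^{-(d+2)}=O(1)$, whence $c(d,\rho^2)\rho^4=O((d\rho^2)^2)\to0$ and $\tfrac{\rho^4}{1-\rho^4}\to0$; thus the exponent tends to $0$ and the right-hand side is $1+o(1)$. I do not anticipate a genuine obstacle here — the computation is essentially routine — the only delicate point is the bookkeeping in the Taylor remainder, which must be arranged so that the leading term stays \emph{clean} as $\tfrac{d\rho^2}{1-\rho^2}$ (with no factor $(1-\rho^2)^{-d}$ in front of it); this is precisely what makes the threshold $\rho^2=o(d^{-1})$, and the companion estimates \eqref{eq:bound-Dconst1}--\eqref{eq:bound-Dconst} used in Proposition~\ref{prop:MainLB}, come out in the stated form.
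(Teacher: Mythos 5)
Your proposal is correct and follows essentially the same route as the paper's proof: factorizing by independence, applying the Poisson moment generating function, bounding $(1-\rho^{2k})^{-d}$ via Taylor's theorem with Lagrange remainder, and summing the resulting series with $-\log(1-x)\le x/(1-x)$. The asymptotic step under $\rho^2=o(d^{-1})$ also matches the paper's argument, so no changes are needed.
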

\begin{proof}
The proof of this lemma is elementary, only requires the moment generating function of Poisson random variables, and some linear approximations of elementary functions. By rearranging the expression in the expectation and using independence we have
\begin{align}
    \E_{\pi}\pp{\prod_{k=1}^m\p{\frac{1}{1-\rho^{2k}}}^{dZ_k}}&=\prod_{k=1}^m\E_{\pi}\pp{\p{\frac{1}{1-\rho^{2k}}}^{dZ_k}}\\
    &=\prod_{k=1}^m\E_{\pi}\pp{\exp\p{-d Z_k \log\p{1-\rho^{2k}}}}\\
    &\overset{(a)}{=}\prod_{k=1}^m\exp\p{\frac{1}{k}\p{e^{-d\log(1-\rho^{2k})}-1}}\\
    &=\prod_{k=1}^m\exp\p{\frac{1}{k}\p{\frac{1}{(1-\rho^{2k})^d}-1}}.
\end{align}
where (a) is followed by the definition of the moment generating function of a Poisson random variable.

We shall now bound the term $(1-\rho^{2k})^{-d}-1$ from above. A straight forward calculation of the Taylor expansion of the function $f(x)=(1-x)^{-d}$ show that for $x\in(0,1)$ we have 
    \[\frac{1}{(1-x)^d}=\sum_{m=0}^\infty \binom{m+d-1}{d-1}x^m.\]
Using Lagrange's remainder theorem and obtain that for all $x>(0,1)$,
\begin{align}
    \frac{1}{(1-x)^d}&=1+dx+\frac{d(d+1)}{2}\cdot\frac{1}{(1-c)^{(d+2)}}x^2\\
    &\leq 1+dx+\frac{d(d+1)}{2(1-x)^{(d+2)}}x^2.
\end{align}
where $c$ is a point in $[0,x]$. Choosing $x=\rho^{2k}$ we get that for any $k>0$
\begin{align}
    \frac{1}{(1-\rho^{2k})^d}&\leq 1+d\rho^{2k}+\frac{d(d+1)}{2(1-\rho^{2k})^{(d+2)}}\rho^{4k}\\
    & \leq 1+d\rho^{2k}+\frac{d(d+1)}{2(1-\rho^{2})^{(d+2)}}\rho^{4k}.
\end{align}
For the rest of our analysis we denote 
\begin{align}
c(d,\rho^2)\triangleq \frac{d(d+1)}{2(1-\rho^{2})^{(d+2)}}\rho^{4k},
\end{align}
and we get
\begin{align}
       \E_{\pi}\pp{\prod_{k=1}^m\p{\frac{1}{1-\rho^{2k}}}^{dZ_k}}&=\prod_{k=1}^m\E_{\pi}\pp{\p{\frac{1}{1-\rho^{2k}}}^{dZ_k}}\\
    &=\prod_{k=1}^m\exp\p{\frac{1}{k}\p{\frac{1}{(1-\rho^{2k})^d}-1}}\\
    &\leq \prod_{k=1}^m\exp\p{\frac{1}{k}\p{1+\p{d\rho^{2k}+c(d,\rho^2)\rho^{4k}}-1}}\\
    &=\exp\p{d\sum_{k=1}^m\frac{\rho^{2k}}{k}+c(d,\rho^2)\sum_{k=1}^m\frac{\rho^{4k}}{k}}\\
    &\leq\exp\p{d\sum_{k=1}^\infty\frac{\rho^{2k}}{k}+c(d,\rho^2)\sum_{k=1}^\infty\frac{\rho^{4k}}{k}}\\
    &=\exp\p{-d\log(1-\rho^2)-c(d,\rho^2)\log(1-\rho^4)}\\
    &\overset{(a)}{\leq}
    \exp\p{\frac{d\rho^2}{1-\rho^2}+\frac{c(d,\rho^2)\rho^4}{1-\rho^4}},
\end{align}
where $(a)$ follows from the well-known inequality $\log(1+x)\geq x/(1+x)$, for $x>-1$. In the case where $\rho^2=o(d^{-1})$ clearly $d\rho^2=o(1)$. Furthermore,
\[c(d,\rho^2)\rho^4=\frac{d(d+1)}{2(1-\rho^2)^{d+2}}\rho^{4}\leq\frac{1}{2} \exp((d+2)\rho^2)d(d+1)\rho^4=\exp(o(1))\cdot o(1)=o(1).\]
In particular, we get 
\begin{align}
    \E_{\pi}\pp{\prod_{k=1}^m\p{\frac{1}{1-\rho^{2k}}}^{dZ_k}}&\leq  \exp\p{\frac{d\rho^2}{1-\rho^2}+\frac{c(d,\rho^2)\rho^4}{1-\rho^4}}\\
    &=\exp\p{\frac{o(1)}{1-o(1)}+\frac{o(1)}{1-o(1)}}=\exp(o(1))=   1+o(1).
\end{align}

\end{proof}

We are now ready to prove Proposition~\ref{prop:MainLB}. The idea of the proof is as follows: we consider the expectation of the product given in \eqref{eq:ProdBound}. In the case that $n\to \infty$, we show that the product of the last $n-\alpha\log n$ terms is always upper-bounded by $1+o(1)$ for an appropriate choice of $\alpha$. For the expectation of the product of the first $\alpha\log n $ terms, use the Poisson approximation of $\{N_k\}_k$ given in Lemma~\ref{lem:PoisApp} and the estimation in that case, given in Lemma~\ref{lem:PoisExp}. The other case, where $n$ is constant, is solvable using elementary arguments.

\begin{proof}[Proof of Proposition~\ref{prop:MainLB}]
We divide our proof into three parts, with respect to the asymptotic regimes of $n$ and $d$. We start  by a simple observation - whenever $n$ is fixed we have $\sum_{k=1}^n k N_K=n$, which implies that for any $1\leq m \leq n$ we have
\begin{align}
    \prod_{k=m}^n \p{\frac{1}{1-\rho^{2k}}}^{dN_k}&\leq\prod_{k=m}^n \p{\frac{1}{1-\rho^{2m}}}^{dN_k}=\p{\frac{1}{1-\rho^{2m}}}^{d\sum_{k=m}^nN_k}\\
    &\leq \p{\frac{1}{1-\rho^{2m}}}^{d n}= \p{1+\frac{\rho^{2m}}{1-\rho^{2m}}}^{d n}\leq \exp\p{\frac{dn\rho^{2m}}{1-\rho^{2m}}}.\label{eq:tailBoundGen}
\end{align}
This immediately proves \eqref{eq:bound-Dconst2}.

\paragraph{The case where both $n$ and $d$ tends to  $\infty$:} we assume that $n,d\to \infty$ and $\rho^2=o(d^{-1})$ or $\rho^2<(1-\varepsilon)d^{-1}$. 
We choose $m=\ceil{\log n}$ and we get
\begin{align}
    dn\rho^{2m}&\leq (d\rho^2) \cdot n(\rho^{2})^{\log(n)-1}=(d\rho^2)n(\rho^{2})^{\log\p{\frac{n}{e}}}=e(d\rho^2)\p{\frac{n}{e}}^{1+\log \rho^2}.
\end{align}
Since $\rho^2<d^{-1}$ (which is clearly true for sufficiently large $d$ in the particular case where $\rho^2=o(d^{-1})$), and $d\to \infty$, we have $\log \rho^2\to -\infty$ as $n\to \infty$. Plugging our chosen $m$ in \eqref{eq:tailBoundGen} obtain:
\begin{equation}\label{eq:LBtail}
    \prod_{k=\log n}^n \p{\frac{1}{1-\rho^{2k}}}^{dN_k} \leq \exp\p{\frac{dn\rho^{2m}}{1-\rho^{2m}}}=\exp(o(1))=1+o(1).
\end{equation}

For a fixed integer $m$, we consider the set $S_{n,m}\subseteq \N^m$ given by 
\begin{align}
S_{n,m}=\ppp{(n_1,\dots,n_m)\in \N^d; \sum_{k=1}^m n_k\leq n},
\end{align}
and a function $f_{n,m}:\N^m\to [0,\infty]$ given by 
\begin{align}
f_{n,m}(n_1,n_2,\dots,n_m)=\prod_{k=1}^m \p{\frac{1}{1-\rho^{2k}}}^{dn_k}\cdot \Ind_{S_{n,m}}(n_1,\dots,n_m).
\end{align}
We note that for all $n_1,\dots,n_m\in \N$,
\begin{equation}
    \label{eq:LBfBound}
    f_{n,m}(n_1,n_2,\dots,n_m)\leq \prod_{k=1}^m \p{\frac{1}{1-\rho^{2}}}^{dn_k}\cdot \Ind_{S_{n,m}}(n_1,\dots,n_m)\leq \p{\frac{1}{1-\rho^{2}}}^{dn}.
\end{equation}

We set $m=\ceil{\log n} $, and let $\{Z_k\}_k$ be independent $\s{Poisson}\p{k^{-1}}$ random variables as in Lemma~\ref{lem:PoisApp}.  Since $\sum_{k=1}^{m} N_k\leq n$ with probability $1$, we have 
\begin{align}
     \E_{\pi}\pp{\prod_{k=1}^m\p{\frac{1}{1-\rho^{2k}}}^{dN_k}}&=\E_{\pi}\pp{f_{n,m}(N_1,\dots,N_{m})}\\
     &\leq \E[f_{n,m}(Z_1,\dots,Z_m)]+d_{\s{TV}}\p{\calL\p{N_1^m},\calL\p{Z_1^m}}\cdot \norm{f_{n,m}}_\infty \\
     &\leq \E_{\pi}\pp{\prod_{k=1}^m\p{\frac{1}{1-\rho^{2k}}}^{dZ_k}}+d_{\s{TV}}\p{\calL\p{N_1^m},\calL\p{Z_1^m}}\cdot \norm{f_{n,m}}_\infty \\
     &\overset{(a)}{\leq} \E_{\pi}\pp{\prod_{k=1}^m\p{\frac{1}{1-\rho^{2k}}}^{dZ_k}}+F\p{\frac{n}{m}}\cdot \p{\frac{1}{1-\rho^{2}}}^{dn} \\
      &\overset{(b)}{\leq} \exp\p{\frac{d\rho^2}{1-\rho^2}+\frac{c(d,\rho^2)\rho^4}{1-\rho^4}}+F\p{\frac{n}{m}}\cdot \p{\frac{1}{1-\rho^{2}}}^{dn}\\
      &= \exp\p{\frac{d\rho^2}{1-\rho^2}+\frac{c(d,\rho^2)\rho^4}{1-\rho^4}}+F\p{\frac{n}{\ceil{\log n}}}\cdot \p{\frac{1}{1-\rho^{2}}}^{dn},
\end{align}
where (a) follows from \eqref{eq:LBfBound} and Lemma~\ref{lem:PoisApp}, (b) follows from Lemma~\ref{lem:PoisExp}.
By Lemma~\ref{lem:PoisApp},  we also have 
\begin{align}
    \log\p{F\p{\frac{n}{\ceil{\log n}}}\p{\frac{1}{1-\rho^{2}}}^{dn}}&\leq \log\p{F\p{\frac{n}{\log n}}\p{\frac{1}{1-\rho^{2}}}^{dn}}\\
    &=\log\p{F\p{\frac{n}{\log n}}}-nd\log\p{1-\rho^{2}}\\
    &\leq -(1+o(1))\frac{n}{\log n}\log\p{\frac{n}{\log n}}-nd\log\p{1-\rho^{2}}\\
    &\overset{(a)}{\leq} -(1+o(1))\frac{n}{\log n}\log\p{\frac{n}{\log n}}+nd\rho^{2}(1+o(1))\\
    &=n\p{-(1+o(1))\p{1-\frac{\log \log n }{\log n}}+d\rho^{2}(1+o(1))}\\
    &=-n(1-d\rho^2+o(1))\xrightarrow[n\to \infty]{(b)} -\infty,
\end{align}
Where $(a)$ follows from the Taylor expansion of the function $\log(1-x)$ and $\rho^2=o(1)$, and (b) follows from the assumption that $\rho^2<(1-\varepsilon)d^{-1}$. This implies that 
\begin{align}
F\p{\frac{n}{\ceil{\log n}}}\p{\frac{1}{1-\rho^{2}}}^{dn}=o(1),
\end{align}
 and therefore,
\begin{equation}\label{eq:LBprefBound}
    \E_{\pi}\pp{\prod_{k=1}^{\ceil{\log n}}\p{\frac{1}{1-\rho^{2k}}}^{dN_k}}\leq  \exp\p{\frac{d\rho^2}{1-\rho^2}+\frac{c(d,\rho^2)\rho^4}{1-\rho^4}}+o(1).
\end{equation}
Combining \eqref{eq:LBtail} and \eqref{eq:LBprefBound} together we conclude:
 \begin{align}
     \E_{\pi}\pp{\prod_{k=1}^{n}\p{\frac{1}{1-\rho^{2k}}}^{dN_k}}&\leq \E_{\pi}\pp{\prod_{k=1}^{\ceil{\log n}}\p{\frac{1}{1-\rho^{2k}}}^{dN_k}\prod_{k=\ceil{\log n}}^{n}\p{\frac{1}{1-\rho^{2k}}}^{dN_k}}\\
     &=(1+o(1)) \E_{\pi}\pp{\prod_{k=1}^{\ceil{\log n}}\p{\frac{1}{1-\rho^{2k}}}^{dN_k}}\\
     &=(1+o(1)) \exp\p{\frac{d\rho^2}{1-\rho^2}+\frac{c(d,\rho^2)\rho^4}{1-\rho^4}}+o(1).
 \end{align}
We have now proved \eqref{eq:bound-Dconst1}. We note that by assuming furthermore that $\rho^2=o(d^{-1})$, by the second part of Lemma~\ref{lem:PoisExp} we get
\begin{align}
    \E_{\pi}\pp{\prod_{k=1}^{n}\p{\frac{1}{1-\rho^{2k}}}^{dN_k}}&\leq(1+o(1))\exp\p{\frac{d\rho^2}{1-\rho^2}+\frac{c(d,\rho^2)\rho^4}{1-\rho^4}}+o(1)=1+o(1).
\end{align}
We have now proved \eqref{eq:ProdBound}.
\paragraph{The case where $n$ is constant and $d$ tends to  $\infty$:} We assume that $\rho^2=o(d^{-1})$. Since $n$ is constant, we have $dn\rho^2=o(1)$ and therefore by \eqref{eq:tailBoundGen} we have
\begin{align}
    E_{\pi}\pp{\prod_{k=1}^n \p{\frac{1}{1-\rho^{2k}}}^{dN_k}}&\leq \exp\p{\frac{dn\rho^{2}}{1-\rho^{2}}}=1+o(1).
\end{align}

\paragraph{The case where $d$ is constant and $n$ tends to  $\infty$:}  we also assume that $\rho^2$ is a constant such that $d< \frac{\log(\rho^2)}{\log(1-\rho^2)}$. We repeat the same steps as in the case where $n\to\infty$ and $\rho^2=o(d^{-1})$ with a minor change. Instead of approximating the product of the first $m=\ceil{\log n}$ terms in the product, we take $m'=\ceil{\alpha \log(n)}$, where $\alpha=-\frac{1}{\log(\rho^2)}+\varepsilon$, where $\varepsilon$ sufficiently small so that
\begin{equation}
    \label{eq:epsChoice}
    d\frac{\log(1-\rho^2)}{\log(\rho^2)}<\frac{1}{1-\varepsilon\log(\rho^2)},
\end{equation}
(such exists by the assumption $d<d^\star(\rho^2)=\frac{\log(\rho^2)}{\log(1-\rho^2)}$). Repeating the same steps as in the previous part, we have 
\begin{align}
\prod_{k=m'}^{n}\p{\frac{1}{1-\rho^{2k}}}^{dN_k}\leq \exp\p{\frac{dn\rho^{2m'}}{1-\rho^{2m'}}}. 
\end{align}
We observe that 
\begin{align}
    nd\rho^{2m'}&\leq nd(\rho^{2})^{\alpha \log n}=ndn^{\alpha \log(\rho^2)}=dn^{1+\alpha \log(\rho^2)}=dn^{\varepsilon \log(\rho^2)}=o(1),
\end{align}
which implies that 
\begin{align}
\prod_{k=m'}^{n}\p{\frac{1}{1-\rho^{2k}}}^{dN_k}\leq \exp\p{\frac{dn\rho^{2m'}}{1-\rho^{2m'}}}=1+o(1).
\end{align}

We now evaluate the product of the first $m'$ terms. In a similar fashion to the first part, by Lemma~\ref{lem:PoisApp} and Lemma~\ref{lem:PoisExp},
\begin{align}
     \E_{\pi}\pp{\prod_{k=1}^{m'}\p{\frac{1}{1-\rho^{2k}}}^{dN_k}}&\leq \E_{\pi}\pp{\prod_{k=1}^{m'}\p{\frac{1}{1-\rho^{2k}}}^{dZ_k}}+F\p{\frac{n}{m'}}\cdot \p{\frac{1}{1-\rho^{2}}}^{dn} \\
      &\leq \exp\p{\frac{d\rho^2}{1-\rho^2}+\frac{c(d,\rho^2)\rho^4}{1-\rho^4}}+F\p{\frac{n}{m'}}\cdot \p{\frac{1}{1-\rho^{2}}}^{dn}\\
      =& \exp\p{\frac{d\rho^2}{1-\rho^2}+\frac{c(d,\rho^2)\rho^4}{1-\rho^4}}+F\p{\frac{n}{\ceil{\alpha \log n }}}\cdot \p{\frac{1}{1-\rho^{2}}}^{dn}.
\end{align}
Using Lemma~\ref{lem:PoisApp} once again,  we obtain
\begin{align}
    &\log\p{F\p{\frac{n}{\ceil{\alpha\log n}}}\p{\frac{1}{1-\rho^{2}}}^{dn}}\leq  -(1+o(1))\frac{n}{\alpha\log n}\log\p{\frac{n}{\alpha\log n}}-nd\log(1-\rho^{2})\\
    &\hspace{2cm}=-n\p{(1+o(1))\frac{1}{\alpha}\p{1-\frac{\log (\alpha\log n) }{ \log n}}+d\log(1-\rho^{2})}\\
    &\hspace{2cm}=-n(1+o(1))\p{\frac{1}{\alpha}+d\log(1-\rho^{2})}\\
    &\hspace{2cm}=-n(1+o(1))\p{\frac{\log(\rho^2)}{\varepsilon\log(\rho^2)-1}+d\log(1-\rho^{2})}.
\end{align} 
By \eqref{eq:epsChoice} we have
\begin{align}
\frac{\log(\rho^2)}{\varepsilon\log(\rho^2)-1}+d\log(1-\rho^{2})>0,
\end{align}
which implies that 
\begin{align}
F\p{\frac{n}{m'}}\cdot \p{\frac{1}{1-\rho^{2}}}^{dn}=o(1).
\end{align}
We now conclude
\begin{align}
     \E_{\pi}\pp{\prod_{k=1}^{n}\p{\frac{1}{1-\rho^{2k}}}^{dN_k}}&= \E_{\pi}\pp{\prod_{k=1}^{m'}\p{\frac{1}{1-\rho^{2k}}}^{dN_k}\prod_{k=m'}^{n}\p{\frac{1}{1-\rho^{2k}}}^{dN_k}}\\
     &= \E_{\pi}\pp{\prod_{k=1}^{m'}\p{\frac{1}{1-\rho^{2k}}}^{dN_k}(1+o(1))}\\
      &\leq (1+o(1))\p{\exp\p{\frac{d\rho^2}{1-\rho^2}+\frac{c(d,\rho^2)\rho^4}{1-\rho^4}}+o(1)}\\
      &= (1+o(1))\exp\p{\frac{d\rho^2}{1-\rho^2}+\frac{c(d,\rho^2)\rho^4}{1-\rho^4}}.
\end{align}
\end{proof}

The proofs of Theorem~\ref{th:LB} and Theorem~\ref{th:LB2} now follows:

\begin{proof}[Proof of Theorem~\ref{th:LB}]
We combine \eqref{eq:LikeSquare}, \eqref{eq:PermProd} and Proposition~\ref{prop:MainLB} and obtain:
\begin{align}
    {\s{R}}^\star&\geq 1-\sqrt{\E_0[L^2]-1}\\
    &=1-\sqrt{\E_{\pi}\pp{\prod_{k=1}^n\frac{1}{(1-\rho^{2k})^{dN_k}}}-1}\\
    &\geq 1-\sqrt{1+o(1)-1}\\
    &=1+o(1).
\end{align}

\end{proof}

\begin{proof}[Proof of Theorem~\ref{th:LB2}]
Let $(\rho,d,n)$ be a sequence satisfying the assumptions of Theorem~\ref{th:LB2}. We start by recalling an important well-known fact (see, for example, \cite[Lemma 2.6 and 2.7]{tsybakov2004introduction}): for any sequence of measures $(\P_{0,k})_k,(\P_{1,k})_k$, 
\[\E_{\P_{0,k}}\pp{\p{\frac{\P_{1,k}}{\P_{0,k}}}^2}=O(1)\implies d_{\s{TV}}(\P_{0,k},\P_{1,k})=1-\Omega(1).\]
Thus, by \eqref{eq:LikeBound}, if $\E_0[L^2]=O(1)$ we have 
\begin{equation}\label{eq:ImpStrongDet}
    \s{R}^\star\geq 1-d_{\s{TV}}(\P_{\calH_0},\P_{\calH_1})=\Omega(1),
\end{equation}
which implies that $(\rho,d,n)$ is not admissible for strong detection (as \eqref{eq:ImpStrongDet} implies $\limsup\s{R}^\star>0$).

Indeed, by Proposition~\ref{prop:MainLB} and \eqref{eq:PermProd}: if $d$ and $\rho$ are constants such that $d<d^\star(\rho^2)$, and $n\to \infty$, then 
\[\E_0[L^2]\leq (1+o(1))\exp\p{\frac{d\rho^2}{1-\rho^2}+\frac{c(d,\rho^2)\rho^4}{1-\rho^4}}=O(1). \]

On the other hand, if $d,n\to\infty$ and $\rho^2<(1-\varepsilon)d^{-1}$, we have that $\rho^2=o(1)$. Thus, it also follows from Proposition~\ref{prop:MainLB} that 
\begin{align}
    \E_0[L^2]&\leq (1+o(1))\exp\p{\frac{d\rho^2}{1-\rho^2}+\frac{c(d,\rho^2)\rho^4}{1-\rho^4}}\\
    &\leq(1+o(1))\exp\p{(1+o(1))\p{d\rho^2+\frac{d(d+1)}{2(1-\rho^2)^{d+2}}\rho^4}}\\
    &\leq (1+o(1))\exp\p{(1+o(1))\p{1+\frac{1}{2e^{-2}(1+o(1))}}}=O(1).
\end{align}

The remaining case is where $d\to\infty$, $n$ is constant, and $\rho^2=O(d^{-1})$. By \eqref{eq:bound-Dconst2}, which is true without any assumptions on $(\rho,d,n)$, we have
\begin{align}
\E_0[L^2]\leq \exp\p{\frac{nd\rho^2}{1-\rho^2}}=O(1).
\end{align}
That concludes the proof.
\end{proof}
\section{Conclusions}

In this paper, we have studied the asymptotic thresholds for weak and strong detection in the Gaussian correlated databases detection problem. Our results are summarized in Table~\ref{tab:exact}. Specifically, in the case where $d$ tends to $\infty$, Theorem~\ref{th:UBbobak} and Theorem~\ref{th:LB} prove that $d^{-1}$ is a sharp threshold. To wit, neither weak nor strong detection is possible if $\rho^2\ll d^{-1}$, while strong detection is possible if $\rho^2\gg d^{-1}$. 

\bibliographystyle{alpha}

\end{document}